\theoremstyle{plain}
\newtheorem{theorem}{Theorem}
\newtheorem{lemma}{Lemma}
\theoremstyle{definition}
\theoremstyle{remark}
\title{Test-driven Reinforcement Learning in Continuous Control}
\author{
    Zhao Yu\equalcontrib\textsuperscript{\rm 1},
    Xiuping Wu\equalcontrib\textsuperscript{\rm 2},
    Liangjun Ke\textsuperscript{\rm 1}\thanks{Corresponding author},
}
\begin{document}

\maketitle

\begin{abstract}
Reinforcement learning (RL) has been recognized as a powerful tool for robot control tasks. RL typically employs reward functions to define task objectives and guide agent learning.
However, since the reward function serves the dual purpose of defining the optimal goal and guiding learning, it is challenging to design the reward function manually, which often results in a suboptimal task representation. 
To tackle the reward design challenge in RL, inspired by the satisficing theory, we propose a Test-driven Reinforcement Learning (TdRL) framework. In the TdRL framework, multiple test functions are used to represent the task objective rather than a single reward function. Test functions can be categorized as pass-fail tests and indicative tests, each dedicated to defining the optimal objective and guiding the learning process, respectively, thereby making defining tasks easier.
Building upon such a task definition, we first prove that if a trajectory return function assigns higher returns to trajectories closer to the optimal trajectory set, maximum entropy policy optimization based on this return function will yield a policy that is closer to the optimal policy set. Then, we introduce a lexicographic heuristic approach to compare the relative distance relationship between trajectories and the optimal trajectory set for learning the trajectory return function. Furthermore, we develop an algorithm implementation of TdRL.
Experimental results on the DeepMind Control Suite benchmark demonstrate that TdRL matches or outperforms handcrafted reward methods in policy training, with greater design simplicity and inherent support for multi-objective optimization.
We argue that TdRL offers a novel perspective for representing task objectives, which could be helpful in addressing the reward design challenges in RL applications.
\end{abstract}

\begin{links}
    \link{Code}{https://github.com/KezhiAdore/TdRL}
\end{links}

\section{Introduction}
\label{sec:introduction}
For an intelligent agent, it is crucial to specify the objective that needs to be achieved \cite{silver_2021, silver_2025}.
In classical reinforcement learning (RL), the objective is typically specified through the reward functions, which steer the agent toward desired behaviors. \cite{sutton_rl_2018}
Crucially, reward design accounts for both defining optimal behavior and guiding the learning process, making it inherently challenging \cite{rajagopal_2023,knox_misdesign_2023,booth_2023}. 
Crafting effective reward functions typically demands domain-specific expertise and reward design experience \cite{knox_misdesign_2023,booth_2023}. 
Nevertheless, the handcrafted reward often represents an imperfect proxy for the true optimization objective, as experts typically view reward as a direct evaluation of the relative goodness of each state-action pair instead of evaluating trajectories.\cite{booth_2023}. 
This mismatch induces fundamental challenges in reinforcement learning, such as reward hacking \cite{amodei_2016}.

To circumvent the challenges associated with manual reward engineering, several approaches have been developed, mainly including Preference-based RL (PbRL, \citeauthor{pbrl_2017} \citeyear{pbrl_2017}), Inverse RL (IRL, \citeauthor{maxent_irl_2008} \citeyear{maxent_irl_2008}), and reward functions generated by Large Language Models (LLMs, \citeauthor{rewardlm_2023} \citeyear{rewardlm_2023}).

PbRL learns reward functions or directly optimizes policies based on human preference feedback, which avoids manual reward design and has demonstrated promising performance in robot control and LLM alignment \cite{pbrl_2017, pbrl_llm_2022}. However, PbRL's human-centric nature introduces several limitations \cite{pbrl_open_problems_2023}.
First, inherent biases in human preference data necessitate dedicated techniques to address them. 
Second, the reliance on human judgments may constrain the agent's behavior within human cognitive boundaries. 
IRL infers reward functions from expert demonstrations, thereby avoiding manual reward designing.
However, IRL typically demands extensive expert demonstration data, and the inferred reward functions typically exhibit poor generalization when evaluated on out-of-distribution data \cite{irl_survey_2021}.
Recent studies have demonstrated that LLMs can outperform humans in designing reward functions \cite{rewardlm_2023,l2r_2023,ellm_2023}. However, these approaches depend on a human-specified domain knowledge for reward design,  and typically require extensive training feedback to polish the reward function \cite{llm_survey_2025}.

Furthermore, real-world applications typically require agents to simultaneously optimize multiple objectives \cite{peter_2023}. This multi-objective optimization presents significant challenges for reward function design, particularly in balancing different objectives.

When solving real-world multi-objective tasks, humans often do not pursue the optimal solution in a certain metric but rather seek a \textit{satisficing solution} \cite{satisficing_1947} across multiple objectives. For example, when driving, people do not blindly minimize time consumption, but rather reach the destination within a certain time while ensuring safety, comfort, and compliance with regulations.

Inspired by this, we propose a test-driven reinforcement learning (TdRL) framework. In the TdRL framework, the agent's objective is passing given tests instead of maximizing the cumulative return, which implies that TdRL seeks to obtain a satisficing solution across multiple objectives rather than an optimal solution in a single metric.
Test functions take trajectories as input and output the test results. According to their functionality, test functions can be categorized into two types: pass-fail tests and indicative tests. 
Pass-fail tests evaluate whether the policy meets the required criteria, while indicative tests provide informative guiding signals for policy learning. Pass-fail tests and indicative tests correspond to defining optimal behavior and guiding the learning process, respectively.

Specifically, the goal of TdRL is to train a policy such that its interaction trajectories with the environment pass all given pass-fail tests. 
To solve this problem, we first prove that if a trajectory return function assigns higher returns to trajectories closer to the optimal trajectory set (where the trajectory passes all given pass-fail tests), maximum entropy policy optimization based on this return function will yield a policy that is closer to the optimal policy set (where the trajectories generated by the policy pass all given pass-fail tests).
Then, we introduce a lexicographic heuristic approach to compare the relative distance relationship between trajectories and the optimal trajectory set. 
Additionally, we develop an algorithm implementation of TdRL.




The TdRL framework provides several key benefits. First, test functions operate on trajectories instead of state-action pairs, mitigating designer-induced bias \cite{booth_2023}. 
Second, the task objective is represented by pass-fail tests rather than a scalar reward function, naturally accommodating multi-objective optimization. 
Finally, test results can evaluate policy performance more accurately than cumulative returns. The main contributions of this paper are summarized as follows:

\begin{itemize}
    \item To address the reward design challenge, we propose a test-driven reinforcement learning framework, where the task objective is represented by several test functions instead of a single reward function.
    \item We propose sufficient conditions for trajectory return functions to guarantee policy convergence to the optimal policy set, and present a lexicographic heuristic approach for constructing return functions based on trajectory testing results.
    \item We develop an algorithm implementation of TdRL and conduct experiments on the DeepMind Control Suite benchmark to show the benefits of TdRL.
\end{itemize}

\section{Related Work}
\label{sec:related_work}

\textbf{Test-driven reward design} has been adapted to RL to ensure robustness and correctness recently \cite{tddreward_2022, tddirl_2024}. Inspired by the test-driven development principle \cite{tdd_2002}, \citeauthor{tddreward_2022} \shortcite{tddreward_2022} proposed an approach that uses test cases to guide reward design, iteratively refining the policy until all tests are passed. \citeauthor{tddirl_2024} \shortcite{tddirl_2024} extended this idea to inverse reinforcement learning, replacing expert demonstrations with scenario-based testing to learn cost functions via Bayesian inference. \citeauthor{testimportance_2024} \shortcite{testimportance_2024} introduced a state-importance-driven testing method for deep RL, prioritizing high-impact states to efficiently detect policy vulnerabilities. These approaches demonstrate the potential of test-driven frameworks to enhance RL in applications.

\textbf{Preference-based reinforcement learning (PbRL)} has emerged to address the challenges of reward design. Instead of relying on handcrafted reward functions, PbRL \cite{pbrl_2017} learns reward models from human preferences over agent behaviors, which in turn guides policy optimization, achieving superior performance compared to handcrafted rewards. 
PEBBLE method \cite{pebble_2021} enhances sample efficiency by integrating off-policy learning and unsupervised pre-training. 
To further reduce human feedback requirements, a semi-supervised method named SURF \cite{surf_2022} combines data augmentation with PbRL to leverage both labeled and unlabeled experience. More recently, \citeauthor{heron_2024} \shortcite{heron_2024} proposed a hierarchical reward modeling framework called HERON that structures preference learning based on the importance of feedback signals, improving performance in sparse-reward scenarios.


Test-driven methods have been applied in RL recently, but they often require \textit{manual} design of approaches to process test results. We introduce the first theoretical framework for test-driven reinforcement learning that is learning-based.
Algorithmically, we adapt the PbRL paradigm of learning the return function from trajectory comparison.

\section{Preliminaries}
\label{sec:preliminary}

A Markov Decision Process (MDP) could be described by a tuple $\mathcal M = <\mathcal S, \mathcal A, \mathcal P, r, \gamma>$ where $\mathcal S$ is the state space, $\mathcal A$ is the action space, $\mathcal P$ is the transition probability function, $r\in\mathcal R:\mathcal S\times\mathcal A\rightarrow \mathbb R$ is the reward function, and $\gamma$ is the discount factor. The goal of reinforcement learning \cite{sutton_rl_2018} is to find a policy $\pi\in\Pi:\mathcal S\times \mathcal A\rightarrow [0,1]$ that maximizes the expected cumulative reward:
\begin{equation}
    \pi^* = \arg\max_{\pi} \mathbb{E}_{\pi} \left[ \sum_{t=0}^{\infty} r(s_t,a_t) \right].
    \label{eq:pi_star}
\end{equation}

A trajectory $\tau\in\mathcal T$ represents a sequence of state-action pairs generated through policy-environment interactions, beginning at some initial state. The trajectory return function $R$ evaluates a trajectory by summing rewards across all constituent state-action pairs:$R(\tau)=\sum_{(s,a)\in\tau} r(s,a)$.
Given these definitions, the reinforcement learning optimization objective is formally expressed as:
\begin{equation}
\pi^* = \arg\max_{\pi} \mathbb{E}_{\tau\sim\pi} \left[ R(\tau) \right].
\end{equation}

\subsection{Maximum Entropy Reinforcement Learning}

Maximum Entropy Reinforcement Learning (MERL, \citeauthor{sac_2018} \citeyear{sac_2018}) extends the standard reinforcement learning framework by incorporating an entropy maximization objective. This approach simultaneously optimizes for both cumulative reward maximization and policy entropy, thereby improving exploration capacity and algorithmic robustness.The objective of MERL could be written as:
\begin{equation}
    \max_\pi \mathbb{E}_{\tau \sim \pi} \left[ \sum_{t=0}^T r(s_t, a_t) + \alpha H(\pi(\cdot|s_t)) \right],
    \label{eq:max_entropy}
\end{equation}
where $H(\pi(\cdot|s_t))$ denotes the policy's entropy at state $s_t$, and $\alpha$ controls the trade-off between the reward and entropy terms.
When policy $\pi_1$ is optimized via maximum entropy reinforcement learning with respect to the trajectory return function $R(\tau)=\sum_{(s,a)\in\tau} r(s,a)$, the resulting policy $\pi_2$ follows:
\begin{equation}
    \pi_2(\tau) = \frac{1}{Z} \pi_1(\tau) \exp\left( \frac{1}{\alpha} R(\tau) \right), \label{eq:maxent}
\end{equation}where $Z$ is the partition function, $\pi(\tau)$ represents the probability of generating trajectory $\tau$ using policy $\pi$. The proof of \cref{eq:maxent} is in Appendix \ref{app:proof_eq:maxent}.

\subsection{Preference-based Reinforcement Learning}
Preference-based reinforcement learning \cite{pbrl_2017} is a class of reinforcement learning algorithms that learns a policy by comparing the rewards of different trajectories. 
A preference relation $\succ$ is defined on $\mathcal{T}$ as follows:
\begin{equation}
    \tau_1 \succ \tau_2 \Leftrightarrow R(\tau_1)\geq R(\tau_2).
    \label{eq:succ}
\end{equation}
Following the Bradley-Terry model \cite{bradley_1952}, we could use a reward function estimate $\hat r$ as the preference predictor, as $\hat r$ could be viewed as a latent factor to explain the preference. We assume the preference probability of a trajectory depends exponentially on its return:
\begin{equation}
    \hat P[\tau_1\succ\tau_2] = \frac{\exp(\hat R(\tau_1))}{\exp(\hat R(\tau_1))+\exp(\hat R(\tau_2))},
    \label{eq:P_succ}
\end{equation}

For a given dataset $\mathcal D$ of triples $(\tau_1,\tau_2,\mu)$, where $\tau_1$ and $\tau_2$ are different trajectories and $\mu$ is a distribution over $\{1,2\}$ indicating which trajectory is preferred, we could learn the reward function by minimize the following loss function :
\begin{align}
    \mathcal L(\hat r) = &- \sum_{(\tau_1,\tau_2,\mu)\in\mathcal D} \mu(1) \log \hat P[\tau_1\succ\tau_2] \notag \\
    &+ \mu(2) \log (1-\hat P[\tau_1\succ\tau_2]).
    \label{eq:L_pbrl}
\end{align}

\section{Method}

In this section, we introduce Test-driven reinforcement learning. We first illustrate the notation and goal in TdRL, then we establish sufficient conditions for trajectory return functions to guarantee policy convergence to the optimal policy set. Additionally, we introduce a lexicographic heuristic approach to compare trajectories for learning trajectory return function. Finally, we detail the implementation of the TdRL algorithm. \cref{fig:algo_prodecure} demonstrates the main procedure of the TdRL algorithm. It follows an iterative procedure mainly consisting of four stages:
\begin{itemize}
    \item \textit{Collection Trajectory}: The agent interacts with the environment to collect trajectories.
    \item \textit{Return Learning}: The return function is updated by the comparison results of trajectories.
    \item \textit{Reward Learning}: The reward function is updated and rewards are recalculated for transitions in the replay buffer.
    \item \textit{Policy Optimization}: The policy network is optimized using transitions in the replay buffer.
\end{itemize}

\label{sec:method}
\begin{figure}[tbp]
\centering
\includegraphics[width=\columnwidth]{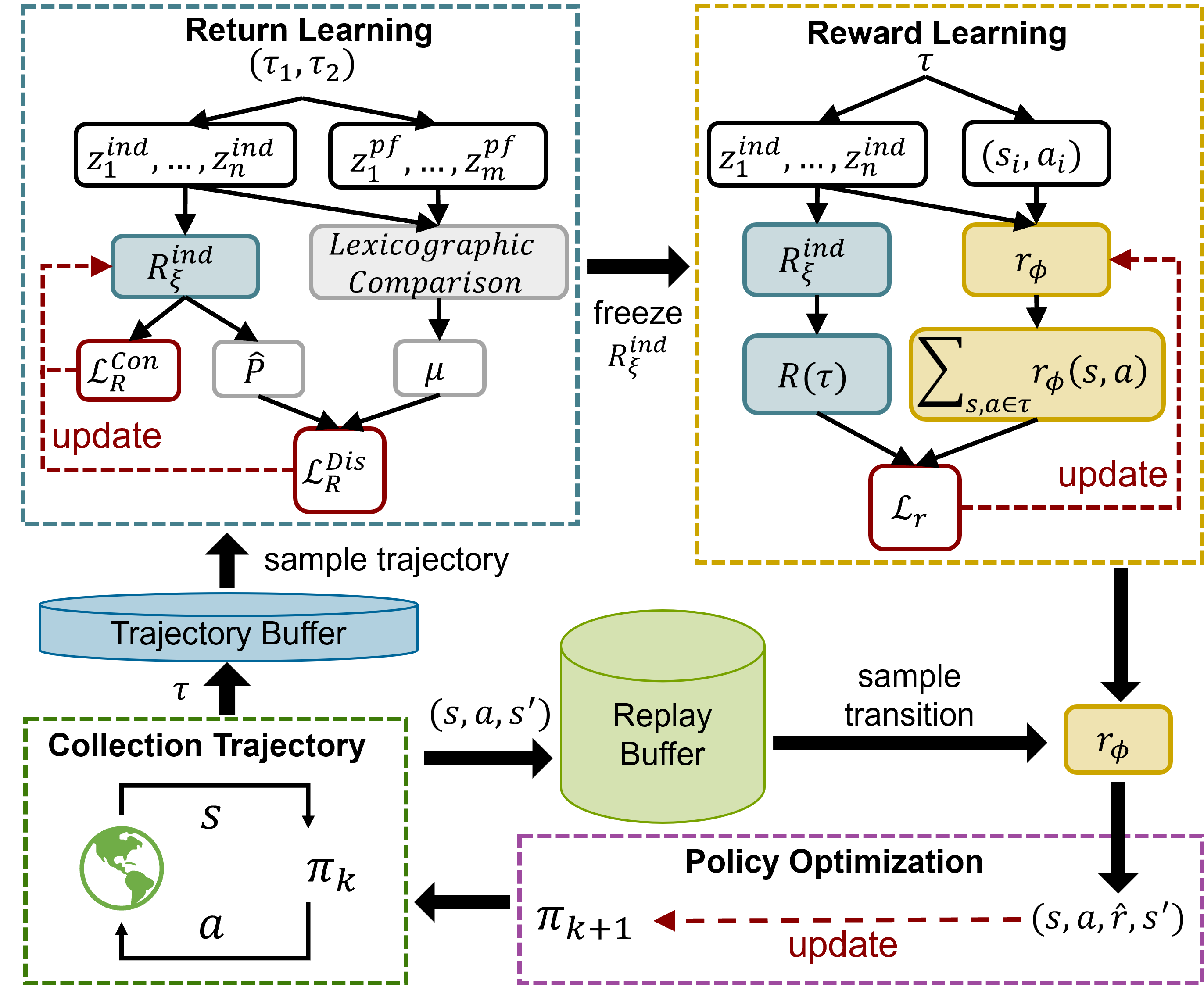}
\caption{The main procedure of the TdRL algorithm.}
\label{fig:algo_prodecure}
\end{figure}

\subsection{Notation and Goal}

TdRL aims to find a policy whose interaction trajectories with the environment pass all given pass-fail tests. Test functions map a trajectory to a test result. Specifically, test functions can be categorized into two types based on their functionality: pass-fail tests and indicative tests. A pass-fail test outputs a binary value judging whether a trajectory meets the required criteria. In contrast, an indicative test outputs a real number, quantifying the performance of a trajectory in a specific metric. Formally, we use $z^{pf}\in \mathcal Z^{pf}: \mathcal T\rightarrow \{0,1\}$ to denote pass-fail test function, and $z^{ind}\in \mathcal Z^{ind}:\mathcal T\rightarrow \mathbb R$ to denote indicative test function.

The task objective in TdRL can be formulated as $<\mathcal S,\mathcal A,\mathcal P,Z^{pf},Z^{ind},\gamma>$, where $Z^{pf}=\{z^{pf}_1,...,z^{pf}_m\}$ represents a set of $m$ pass-fail test functions and $Z^{ind}=\{z^{ind}_1,...,z^{ind}_n\}$ represents a set of $n$ indicative test functions. The goal of TdRL is to find a policy $\pi^*$ such that the resulting trajectory $\tau$ from the interaction between $\pi^*$ and the environment satisfies:
\begin{equation}
    \mathbb E_{\tau\sim\pi^*} \left[\sum_{i=1}^m z^{pf}_i(\tau)\right]=m.
\end{equation}

Following the above definition, we define the optimal trajectory set as the collection of all trajectories that pass all pass-fail tests, denoted as:
\begin{equation}
    \tilde{\mathcal{T}} = \{\tau | \tau\in\mathcal T, \sum_{i=1}^m z^{pf}_i(\tau) = m\}.
\end{equation}
Let $\tilde{\mathcal{T}}_i = \{\tau | \tau\in\mathcal T, z^{pf}_i(\tau) = 1\}$ denote the set of trajectories that pass the pass-fail test $z^{pf}_i$. Then, we have $\tilde{\mathcal{T}} = \cap_{i=1}^m \tilde{\mathcal{T}}_i$.

Similarly, we define the optimal policy set as the collection of all policies whose interaction with the environment generates trajectories that pass all pass-fail tests, denoted as:
\begin{equation}
    \tilde{\Pi} = \{\pi | \pi \in \Pi, \mathbb{E}_{\tau \sim \pi} [\sum_{i=1}^m z^{pf}_i(\tau)] = m\}.
\end{equation}
Let $\tilde{\Pi}_i = \{\pi | \pi \in \Pi, \mathbb{E}_{\tau \sim \pi} [z^{pf}_i(\tau)] = 1\}$ denote the set of policies whose interaction with the environment generates trajectories that pass the pass-fail test $z^{pf}_i$. Then, we have $\tilde{\Pi} = \cap_{i=1}^m \tilde{\Pi}_i$.

\subsection{Test Functions to Return Function}

The goal of TdRL is to find a policy $\pi^*$ that generates trajectories passing all pass-fail tests. In other words, we need to find a policy in $\tilde{\Pi}$. To achieve this, we could construct a trajectory return function $R$ such that policy optimization based on this function could converge to a policy belonging to $\tilde{\Pi}$.
Let $d(\tau_1,\tau_2)$ denote the distance between trajectories $\tau_1$ and $\tau_2$ in the trajectory space, and the distance between a trajectory $\tau$ and a trajectory set $\hat{\mathcal{T}}$ is defined as:
\begin{equation}
 d(\tau,\hat{\mathcal{T}}) = \min_{\tau'\in\hat{\mathcal{T}}} d(\tau,\tau'). 
 \label{eq:d_tau_hatT}
\end{equation}

For a policy $\pi$, let $P_\pi:\mathcal{T}\rightarrow[0,1]$ denote the distribution of trajectories generated through interaction with the environment, satisfying $\int_\tau P_\pi(\tau)d\tau=1$. The distance between policies $\pi_1$ and $\pi_2$ is defined by the Wasserstein-$p$ distance \cite{villani_2008} between their corresponding trajectory distributions $P_{\pi_1}$ and $P_{\pi_2}$:
\begin{align}
    &d(\pi_1,\pi_2) = W_p(P_{\pi_1}, P_{\pi_2}) \notag \\
    &= \left( \inf_{\gamma \in \Gamma(P_{\pi_1}, P_{\pi_2})} \int_{\mathcal{T} \times \mathcal{T}} d(\tau_1, \tau_2)^p \, d\gamma(\tau_1, \tau_2) \right)^{\frac{1}{p}},
\end{align}
where $\Gamma(P_{\pi_1}, P_{\pi_2})$ denotes the set of all joint probability distributions whose marginal distributions are $P_{\pi_1}$ and $P_{\pi_2}$, respectively.
The distance between policy $\pi$ and policy set $\hat{\Pi}$ is defined as:
\begin{equation}
    d(\pi,\hat{\Pi}) = \min_{\pi'\in\hat{\Pi}} d(\pi,\pi').
\end{equation} 

\begin{theorem}
    \label{theorem:R_pi}
    If there exists a trajectory return function $R(\tau)$ that is monotonically non-increasing with respect to the distance between a trajectory $\tau$ and the optimal trajectory set $\tilde{\mathcal{T}}$, such that:
$$
d(\tau_1, \tilde{\mathcal{T}}) \leq d(\tau_2, \tilde{\mathcal{T}}) \implies R(\tau_1) \geq R(\tau_2).
$$
Suppose policy $\pi_2$ is obtained by optimizing policy $\pi_1$ using a maximum entropy algorithm with respect to $R$, 
Then, policy $\pi_2$ is closer to the optimal policy set $\tilde{\Pi}$ than $\pi_1$:
$$
d(\pi_1, \tilde{\Pi}) \geq d(\pi_2, \tilde{\Pi}).
$$
\end{theorem}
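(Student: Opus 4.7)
The plan is to exploit the maximum-entropy reweighting \cref{eq:maxent}, $\pi_2(\tau)=\pi_1(\tau)\exp(R(\tau)/\alpha)/Z$, together with the hypothesis that $R$ is antimonotone in $d(\tau,\tilde{\mathcal{T}})$. The hypothesis in fact forces $R(\tau)=h(d(\tau,\tilde{\mathcal{T}}))$ for some non-increasing $h$ (two trajectories at equal distance must receive equal returns), so the ratio $\pi_2/\pi_1$ is itself a non-increasing function of $d(\tau,\tilde{\mathcal{T}})$. First, I would establish a trajectory-space contraction inequality via the classical correlation bound for monotone functions of a common random variable: because $d(\tau,\tilde{\mathcal{T}})^p$ is non-decreasing and $\exp(R(\tau)/\alpha)$ is non-increasing in the same scalar $d(\tau,\tilde{\mathcal{T}})$, their covariance under $\pi_1$ is non-positive, yielding
\begin{equation*}
\mathbb{E}_{\tau\sim\pi_2}[d(\tau,\tilde{\mathcal{T}})^p]
=\frac{\mathbb{E}_{\tau\sim\pi_1}[d(\tau,\tilde{\mathcal{T}})^p\,e^{R(\tau)/\alpha}]}{\mathbb{E}_{\tau\sim\pi_1}[e^{R(\tau)/\alpha}]}
\le \mathbb{E}_{\tau\sim\pi_1}[d(\tau,\tilde{\mathcal{T}})^p].
\end{equation*}

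Next, I would lift this trajectory-space inequality to the policy-level distance $d(\pi,\tilde{\Pi})$. The lower bound $d(\pi,\tilde{\Pi})^p \ge \mathbb{E}_{\tau\sim\pi}[d(\tau,\tilde{\mathcal{T}})^p]$ is immediate: every $\pi^* \in \tilde{\Pi}$ has $P_{\pi^*}$ supported on $\tilde{\mathcal{T}}$, since $\sum_i z^{pf}_i(\tau)\le m$ while its $\pi^*$-expectation equals $m$; therefore every coupling in $\Gamma(P_\pi,P_{\pi^*})$ pays at least the projection cost $\mathbb{E}_\pi[d(\tau,\tilde{\mathcal{T}})^p]$. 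For the matching upper bound I would exhibit a policy $\pi^\# \in \tilde{\Pi}$ whose trajectory distribution is the pushforward of $P_\pi$ under the nearest-point map $\tau\mapsto\arg\min_{\tau'\in\tilde{\mathcal{T}}}d(\tau,\tau')$; the induced coupling realises the projection cost exactly. Chaining the two bounds then gives $d(\pi_2,\tilde{\Pi})^p \le \mathbb{E}_{\pi_2}[d(\tau,\tilde{\mathcal{T}})^p] \le \mathbb{E}_{\pi_1}[d(\tau,\tilde{\mathcal{T}})^p] = d(\pi_1,\tilde{\Pi})^p$, as required.

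The hard part will be this last realisability step: when the MDP does not admit a policy whose trajectory distribution equals the projection pushforward of $P_\pi$, the equality $d(\pi,\tilde{\Pi})^p=\mathbb{E}_\pi[d(\tau,\tilde{\mathcal{T}})^p]$ can fail and the chain breaks. I expect to address this either by invoking a richness assumption on $\tilde{\Pi}$ (every distribution supported on $\tilde{\mathcal{T}}$ is realised by some policy in $\tilde{\Pi}$), or, absent such an assumption, by constructing a more delicate coupling that starts from the optimal plan $\gamma_1^* \in \Gamma(P_{\pi_1},P_{\pi^*})$ achieving $d(\pi_1,\tilde{\Pi})$, reweights its first coordinate by $\exp(R/\alpha)/Z$ to produce a coupling with first marginal $P_{\pi_2}$, and then verifies that the resulting second marginal on $\tilde{\mathcal{T}}$ is itself the trajectory distribution of some policy in $\tilde{\Pi}$. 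That is the only place where MDP-specific structure must enter the argument.
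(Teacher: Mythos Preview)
Your proposal is correct and in several respects cleaner than the paper's own argument. The paper decomposes the result into two lemmas: first it pushes both trajectory distributions forward to the scalar $\rho=d(\tau,\tilde{\mathcal T})$ and argues that the density ratio $P_{\pi_2}(\rho)/P_{\pi_1}(\rho)$ is non-increasing in $\rho$ (via a Jensen-sandwich step that, as you implicitly observe, is unnecessary once one notices the hypothesis forces $R=h\circ d$); second it identifies $d(\pi,\tilde\Pi)$ with $W_p(P_\pi(\rho),\delta_0)=(\mathbb E_\pi[\rho^p])^{1/p}$ on the $\rho$-line and applies a rearrangement-type inequality. Your single-step Chebyshev/covariance bound subsumes both lemmas more directly. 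The realisability issue you flag is precisely what the paper elides: by silently switching to the $\rho$-pushforward Wasserstein distance, it never verifies that the trajectory-space distance to $\tilde\Pi$ coincides with $(\mathbb E_\pi[\rho^p])^{1/p}$. If you adopt that identification the difficulty disappears; if you keep the trajectory-space distance as originally defined, your richness assumption or coupling-reweighting construction is genuinely required, and you would then be proving more than the paper does. One minor simplification: in your final chain you only need the realisability upper bound for $\pi_2$ together with the automatic lower bound for $\pi_1$, so the terminal ``$=$'' can be relaxed to ``$\le$'' and realisability need only be invoked once.
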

The proof of \cref{theorem:R_pi} is shown in \cref{app:proof_R_pi}.
\cref{theorem:R_pi} demonstrates that if there exists a trajectory return function $R$ that assigns higher values to trajectories closer to the optimal trajectory set $\tilde{\mathcal T}$, then performing maximum entropy policy optimization on this return function will yield a policy that is closer to the policy set $\tilde\Pi$ than the original policy.
To obtain a trajectory reward function $R$ that satisfies the above conditions, we consider two key issues: how to construct the trajectory return function and how to design a loss function for learning such a return function.

There typically exist numerous natural indicative signals in environments. The key challenge lies in effectively combining these indicative signals into a reward signal that can properly guide the learning process \cite{Juechems_2019,silver_2025}. Motivated by this insight, our approach does not directly construct a function mapping trajectories to returns. Instead, we regard indicative test results as indicative signals, and we utilize indicative test functions to construct a trajectory return function. Appendix \ref{app:indicative} provides a detailed interruption about the indicative test function combination.

Specifically, we employ a fully connected network that takes an input vector of dimension $n$ and produces a scalar output parameterized by $\xi$ to construct a return mapping function $R^{ind}_\xi$. Then, the trajectory return function $R$ could be constructed as a composite function of $R^{ind}_\xi$ and the $n$ indicative test functions:
\begin{equation}
    R(\tau) = R^{ind}_\xi(z^{ind}_1(\tau), z^{ind}_2(\tau), ..., z^{ind}_n(\tau)).
\end{equation}

Following the Bradley-Terry model \cite{bradley_1952}, if we consider the distance to the optimal trajectory set as a measure of trajectory quality, we can estimate the probability of distance relationships between trajectories by the return function. For convenience, we define $\tilde d(\tau)$ as the distance between trajectory $\tau$ and the optimal trajectory set $\tilde{\mathcal{T}}$. Then, the probability of $\tau_1$ being closer to $\tilde{\mathcal{T}}$ than $\tau_2$ can be estimated as:
\begin{equation}
    \hat{P}[\tilde{d}(\tau_1) < \tilde{d}(\tau_2)] = \frac{\exp(R(\tau_1))}{\exp(R(\tau_1)) + \exp(R(\tau_2))}.
\end{equation}

Suppose $\mu \in \{0,0.5,1\}$ represents the true probability that $\tau_1$ is closer to $\tilde{\mathcal{T}}$ than $\tau_2$, where $\mu=0$ means $\tilde{d}(\tau_1) > \tilde{d}(\tau_2)$, $\mu=1$ means $\tilde{d}(\tau_1) < \tilde{d}(\tau_2)$, and $\mu=0.5$ means $\tilde{d}(\tau_1) = \tilde{d}(\tau_2)$. Then, we could construct the following distance-based loss function:
\begin{align}
    \mathcal{L}_R^{Dis} = 
    &-\sum_{(\tau_1,\tau_2,\mu)\in\mathcal{D}} \left[ \mu \cdot \log \hat{P}[\tilde{d}(\tau_1) < \tilde{d}(\tau_2)]\right. \notag \\
    &+ \left.(1-\mu) \cdot \log \left(1 - \hat{P}[\tilde{d}(\tau_1) < \tilde{d}(\tau_2)]\right) \right], \label{eq:loss_return_dis}
\end{align}
where $\mathcal{D}$ is a dataset of trajectory pairs $(\tau_1,\tau_2)$ and their corresponding probabilities $\mu$.
Furthermore, to ensure numerical stability during return function learning, we introduce a penalty term:
\begin{equation}
    \mathcal{L}_R^{Penalty}=\sum_{(\tau_1,\tau_2,\mu)\in\mathcal{D}}  \sum_{i\in\{1,2\}}\left(R(\tau_i) - \tilde{R}(\tau_i)\right)^2, 
    \label{eq:loss_return_con}
\end{equation}
where $\tilde{R}$ denotes the trajectory return values computed before network updating.

Then, we decompose the learned trajectory return function into a state-action reward function for policy learning.
We parameterize the reward function as $r_\phi(s,a)$ and construct the reward learning loss:
\begin{equation}
    \mathcal L_r=\sum_{\tau\in \mathcal D} \left[R(\tau)-\sum_{(s,a)\in\tau}r_\phi(s,a) \right]^2 . 
    \label{eq:loss_reward}
\end{equation}



By minimizing the loss $\mathcal{L}_R^{Dis}$ and $\mathcal{L}_R^{Penalty}$, we can learn the return mapping function $R^{ind}_\xi$. Then we could decompose the trajectory return to state-action reward by learning a reward function through minimizing the loss $\mathcal L_r$.

Computing $\mathcal{L}_R^{Dis}$ requires the relative distance relationships between trajectories.
However, during the learning process, the trajectories contained in the optimal trajectory set $\tilde{\mathcal{T}}$ remain unknown, making it intractable to directly compute the distances $\tilde{d}(\tau)$.
Notably, we do not require exact values of $\tilde{d}(\tau)$ when computing the loss $\mathcal{L}_R^{Dis}$, but only need to determine the relative distance relationships between trajectories, specifically, whether $\tilde{d}(\tau_1) > \tilde{d}(\tau_2)$, $\tilde{d}(\tau_1) < \tilde{d}(\tau_2)$ or $\tilde{d}(\tau_1) = \tilde{d}(\tau_2)$. In the next section, we will introduce a lexicographic heuristic approach to construct the relative distance relationships between trajectories.

\subsection{Lexicographic Trajectory Comparison}
\label{sec:lexicographic}

For trajectories $\tau_1$ and $\tau_2$, we need to compare their distances to the optimal trajectory set $\tilde{\mathcal{T}}$, i.e., $\tilde{d}(\tau_1)$ and $\tilde{d}(\tau_2)$. However, directly computing $\tilde{d}(\tau)$ is infeasible as $\tilde{\mathcal T}$ is unknown. Under such limited information conditions, we employ a lexicographic heuristic \cite{lexicographic_2020} approach to compare the distances. The lexicographic heuristic is efficient, practical, and adaptable, which is a fast and frugal strategy originally from human decision-making \cite{gigerenzer_1996,kk_2013,kk_2021}. Specifically, the lexicographic method sequentially evaluates information in priority order and makes decisions based on the first criterion that meets predefined thresholds (e.g., exceeding certain values).

Building upon the definitions of pass-fail tests and indicative tests, along with the relationship between $\tilde{\mathcal{T}_i}$ and $\tilde{\mathcal{T}}$, we could derive the following priors (The interpretation of the comparison priors is detailed in Appendix \ref{app:interpertation}.):
\begin{itemize}
    \item Pass-fail tests take precedence over indicative tests
    \item A trajectory satisfying more pass-fail tests is closer to $\tilde{\mathcal{T}}$
    \item A trajectory passing more challenging tests (corresponding to smaller $\tilde{\mathcal{T}_i}$) is closer to $\tilde{\mathcal{T}}$
    \item All trajectories within $\tilde{\mathcal{T}}$ have zero distance to $\tilde{\mathcal{T}}$
    \item Under-optimized indicators should be prioritized
\end{itemize}

Building upon the established priors, we could compute the probability $\mu$ that $\tau_1$ is closer to $\tilde{\mathcal{T}}$ than $\tau_2$ through the following lexicographical  procedure (more details are shown in Appendix \ref{app:interpertation}):
\begin{enumerate}
    \item If $\sum_{i=1}^m z^{pf}_i(\tau_j) = m, \forall \tau\in\{\tau_1,\tau_2\}$, return $\mu=0.5$.
    \item Compare pass-fail test passing count:
    \begin{itemize}
        \item If $\sum_{i=1}^m z^{pf}_i(\tau_1) > \sum_{i=1}^m z^{pf}_i(\tau_2)$, return $\mu=1$
        \item If $\sum_{i=1}^m z^{pf}_i(\tau_1) < \sum_{i=1}^m z^{pf}_i(\tau_2)$, return $\mu=0$
    \end{itemize}
    \item Sort pass-fail tests in ascending order of historical pass rates (i.e., descending difficulty) as $\{z^{pf}_{k_1},...,z^{pf}_{k_m}\}$. Sequentially compare $z^{pf}_{k_i}(\tau_1)$ and $z^{pf}_{k_i}(\tau_2)$.
    \begin{itemize}
        \item If $z^{pf}_{k_i}(\tau_1) > z^{pf}_{k_i}(\tau_2)$, return $\mu=1$
        \item If $z^{pf}_{k_i}(\tau_1) < z^{pf}_{k_i}(\tau_2)$, return $\mu=0$
    \end{itemize}
    \item Sort indicative tests in descending order of the skewness of historical test results (least-optimized first) as $\{z^{ind}_{l_1},...,z^{ind}_{l_n}\}$. Sequentially compare $z^{ind}_{l_i}(\tau_1)$ and $z^{ind}_{l_i}(\tau_2)$.
    \begin{itemize}
        \item If $z^{ind}_{l_i}(\tau_1) > z^{ind}_{l_i}(\tau_2)$, return $\mu=1$
        \item If $z^{ind}_{l_i}(\tau_1) < z^{ind}_{l_i}(\tau_2)$, return $\mu=0$
    \end{itemize}

    \item Return $\mu=0.5$, i.e., trajectories are indistinguishable
\end{enumerate}


\subsection{TdRL Algorithm}
\label{sec:algo}

\begin{algorithm}[tb]
\caption{TdRL}
\label{alg:tdrl}
\textbf{Require}: frequency of return network update $K$\\
\textbf{Require}: pass-fail tests $\{z_1^{pf}, z_2^{pf},...z_m^{pf}\}$, indicative tests $\{z_1^{ind}, z_2^{ind},...,z_n^{ind}\}$
\begin{algorithmic}[1] 
\STATE Initial $\pi_\theta$, $R^{ind}_{\xi}$, $r_\phi$, and $\tau=\emptyset$
\STATE Initial trajectory buffer $\mathcal D=\emptyset$, replay buffer $\mathcal B=\emptyset$.
\FOR{each iteration}
\STATE Sample action $a$ from $\pi_\theta(\cdot |s)$
\STATE Exec action $a$ in environment and get $(s',done)$
\STATE Update Trajectory $\tau \leftarrow \tau \cup \{(s,a,s')\}$
\STATE Store transition $\mathcal B \leftarrow \mathcal B \cup \{(s,a,s',r_\phi(s,a), done)\}$
\IF {$done$}
\STATE Store trajectory $\mathcal D \leftarrow \mathcal D \cup \tau$, reset $\tau=\emptyset$
\ENDIF
\IF {iteration $\% K ==0$}
\FOR {each gradient step}
\STATE Sample minibatch $\{(\tau_1,\tau_2)_j\}_{j=1}^D \sim \mathcal D$
\STATE Compute $\mu$ for each pair $(\tau_1,\tau_2)$ by the lexicographic trajectory comparison approach
\STATE Optimize $\mathcal L_R^{Dis}$ in (\ref{eq:loss_return_dis}) and $\mathcal L_R^{Penalty}$ in (\ref{eq:loss_return_con})  with respect to $\xi$
\STATE Optimize $\mathcal L_r$ in (\ref{eq:loss_reward}) with respect to $\phi$
\ENDFOR
\STATE Relabel entire replay buffer $\mathcal B$ using $r_\phi$
\ENDIF
\FOR {each gradient step}
\STATE Sample random minibatch from $\mathcal B$
\STATE Update policy network $\theta$
\ENDFOR
\ENDFOR
\end{algorithmic}
\end{algorithm}

\cref{alg:tdrl} presents the detailed procedure of the TdRL algorithm. TdRL employs $\mathcal L_R^{Dis}$ to learn the return function and constrains the update step via $\mathcal L_R^{Penalty}$. However, since the gradient magnitudes of the two losses can not be directly comparable, selecting appropriate weights for balancing them is challenging. Specifically, $\mathcal L_R^{Dis}$ is a cross-entropy loss, whose gradient scales with the probability difference, whereas $\mathcal L_R^{Penalty}$ is an MSE loss, whose gradient depends on the variation in the return output. To address this issue, we propose two methods: \textit{gradient norm} (GN) and \textit{early stop} (ES). Both methods compute the gradients of $\mathcal L_R^{Dis}$ ($\nabla_\xi \mathcal L_R^{Dis}$) and $\mathcal L_R^{Penalty}$ ($\nabla_\xi \mathcal L_R^{Penalty}$) before each network update and use their sum to update the network parameters. The key distinction lies in the fact that GN rescales the MSE gradient to match the cross-entropy gradient L2-norm when the former exceeds the latter, and ES stops training if the MSE gradient L2-norm surpasses a predefined multiple ($K^{ES}$) of the cross-entropy gradient L2-norm.

\section{Experiments}
\label{sec:experiments}

We design our experiments to investigate the benefits of using test functions for representing task objectives, as well as the performance of the TdRL algorithm.

\subsection{Setups}

We evaluate TdRL on several continuous robot control tasks from DeepMind Control Suite (DM-Control, \citeauthor{dmcontrol_2020} \citeyear{dmcontrol_2020}). We employ the Soft Actor-Critic (SAC, \citeauthor{sac_2018} \citeyear{sac_2018}) algorithm as the backbone for the implementation of TdRL. To enhance experience diversity in early training, we use unsupervised RL for warm-up. Hyperparameters for the algorithm implementation are detailed in Appendix \ref{app:hyperparameter}. In \cref{sec:algo}, we introduce two approaches—\textit{gradient norm} and \textit{early stop}—to balance the two loss terms $\mathcal L^{Dis}_R$ and $\mathcal L^{Penalty}_R$ in return function learning. For notational clarity, we refer to the TdRL variants employing these methods as TdRL-GN and TdRL-ES.

\subsection{Main Results}

We treat the environment rewards in DM-Control as the oracle rewards. To focus on the efficiency of TdRL, we only use the components of the oracle rewards in environments to construct the test functions. For instance, in the \textit{Walker-Walk} task, the oracle reward is derived from three components: torso height, torso upright, and move speed. Our test functions for this task also only use these components. Detailed test functions for each task are provided in Appendix \ref{app:tester}. More experiment results are shown in \cref{app:more_exp}

\textbf{TdRL eliminates manual objective weighting while achieving performance comparable to or better than carefully handcrafted reward functions.}
\cref{fig:dm_control_performance} demonstrates that TdRL achieves comparable or superior performance to SAC with oracle rewards in continuous robot control tasks. Since TdRL requires learning the reward function during policy optimization, its performance improvement during early training stages is slower compared to using oracle rewards. Furthermore, both TdRL-GN and TdRL-ES exhibit strong performance, indicating the effectiveness of combining cross-entropy and MSE loss in both approaches.

\textbf{TdRL can also be applied to on-policy RL algorithms.} 
Although TdRL is theoretically grounded in maximum entropy reinforcement learning, the TdRL algorithm can also be applied to on-policy RL methods. We maintain the reward learning component of TdRL while replacing its policy update algorithm with Proximal Policy Optimization (PPO, \citeauthor{ppo_2017} \citeyear{ppo_2017}). \cref{fig:dm_control_performance} demonstrates that TdRL with PPO achieves comparable performance to PPO with oracle rewards on several tasks, while exhibiting significant performance gaps on others. These results suggest TdRL's potential for integration with on-policy reinforcement learning algorithms.

\begin{figure}[tb]
\centering
\includegraphics[width=\columnwidth]{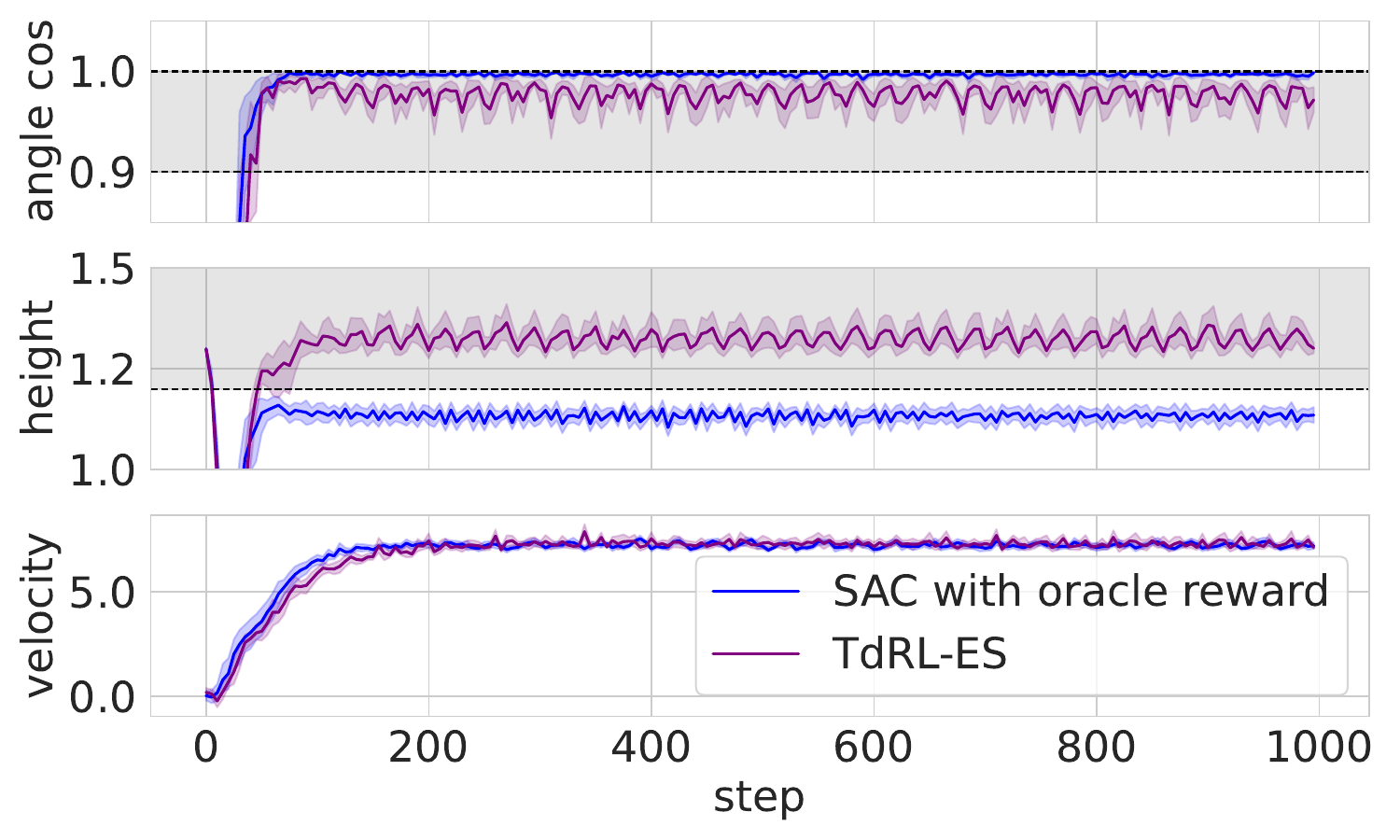}
\caption{Performance comparison in multi-objectives between SAC with oracle reward and TdRL in the \textit{Walker-Run} task. The gray shaded area represents the predefined performance threshold for each metric.}
\label{fig:fine-grained}
\end{figure}

\begin{figure*}[tb]
\centering
\begin{subfigure}[b]{0.245\textwidth}
    \centering
    \includegraphics[width=\textwidth]{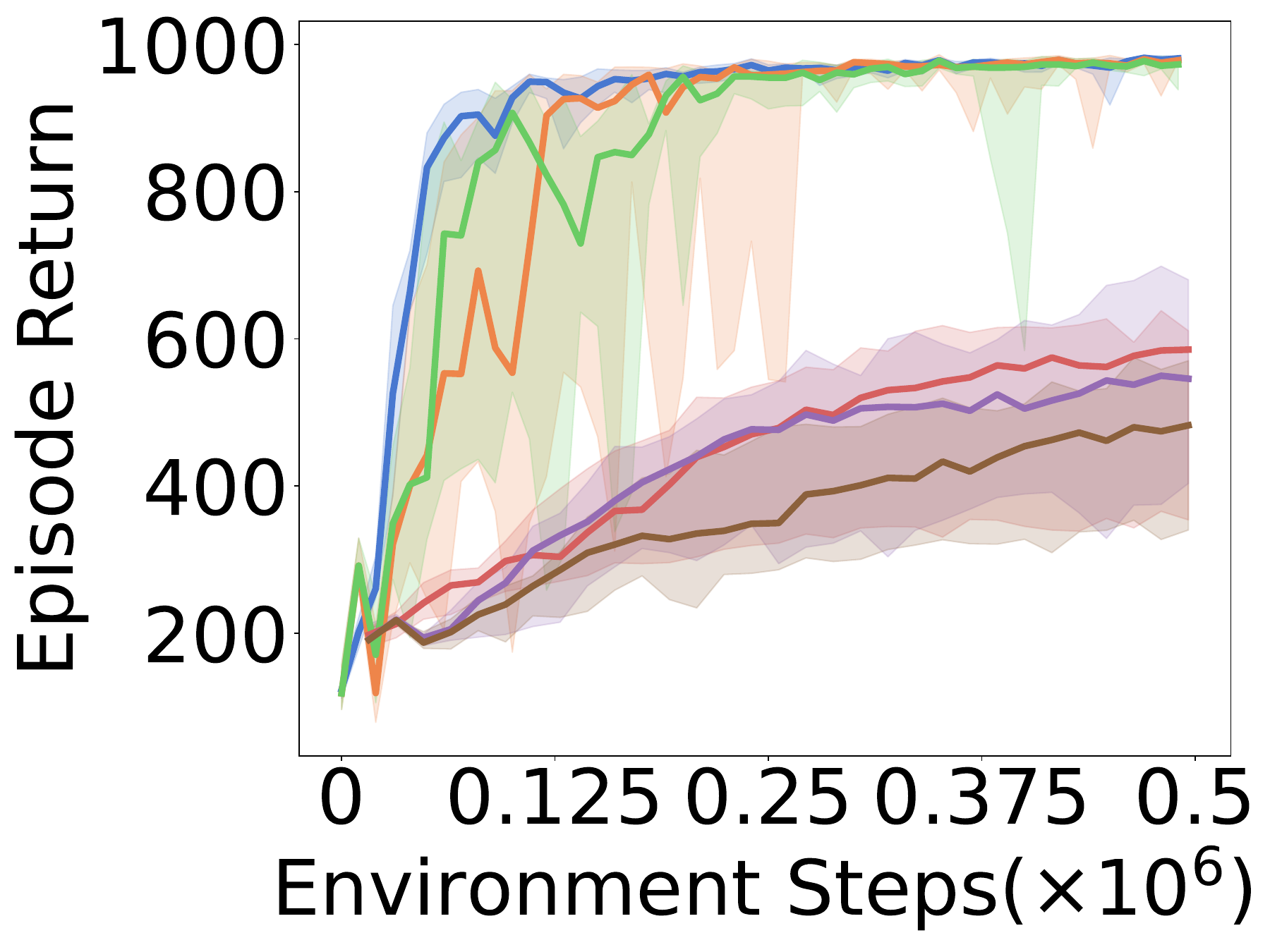}
    \caption{Walker-Stand}
\end{subfigure}
\begin{subfigure}[b]{0.245\textwidth}
    \centering
    \includegraphics[width=\textwidth]{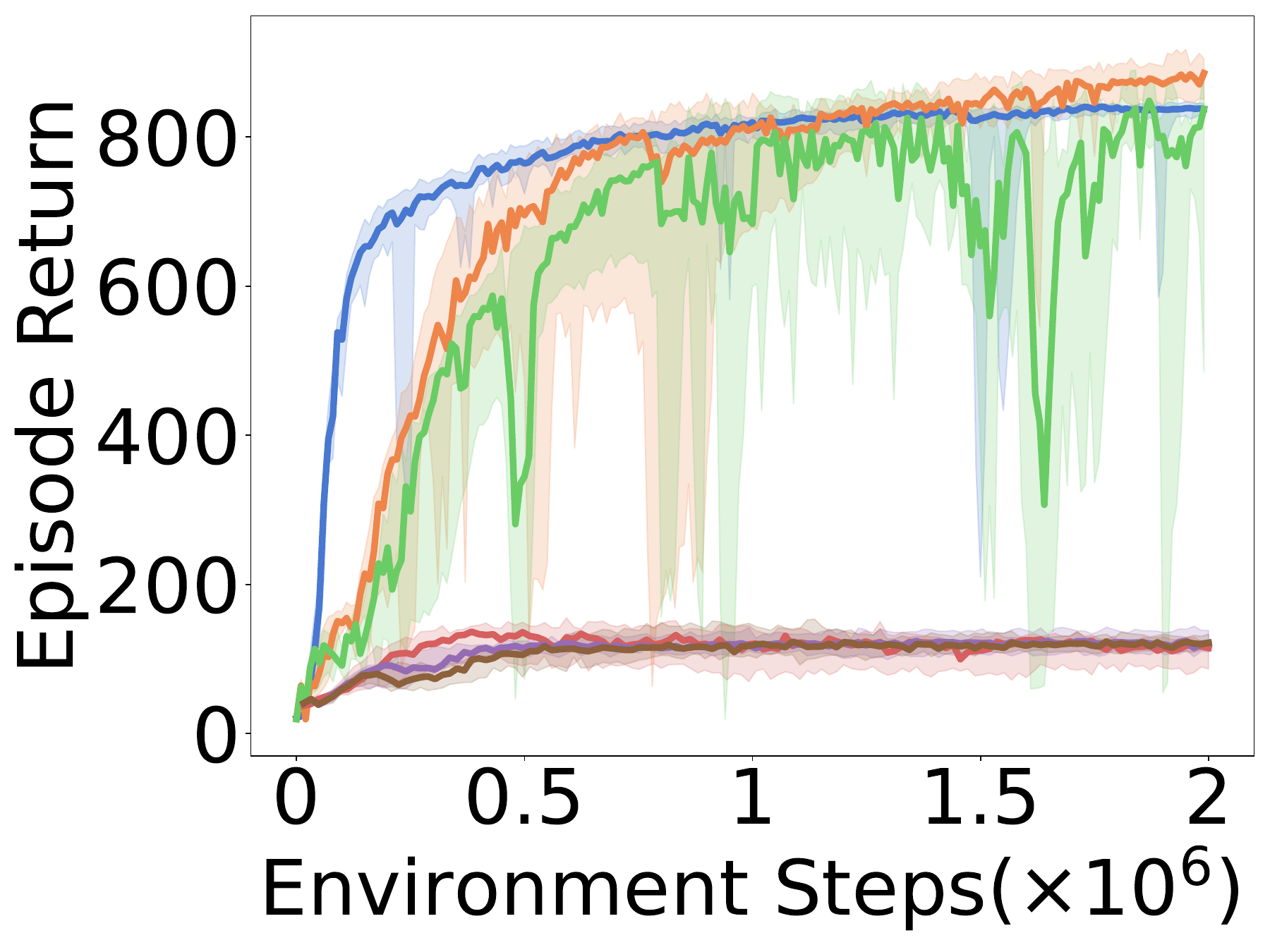}
    \caption{Walker-Run}
\end{subfigure}
\begin{subfigure}[b]{0.245\textwidth}
    \centering
    \includegraphics[width=\textwidth]{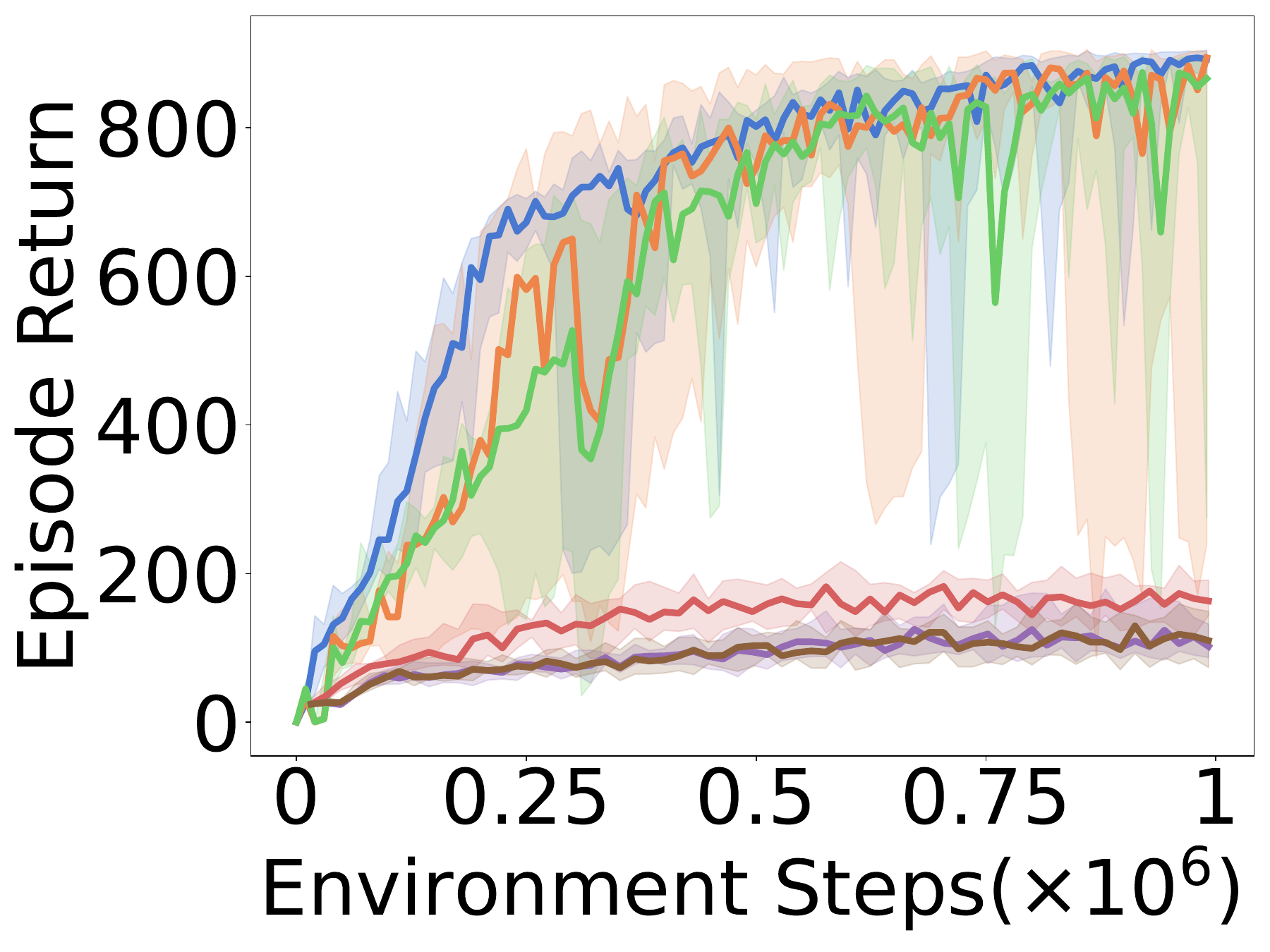}
    \caption{Cheetah-Run}
\end{subfigure}
\begin{subfigure}[b]{0.245\textwidth}
    \centering
    \includegraphics[width=\textwidth]{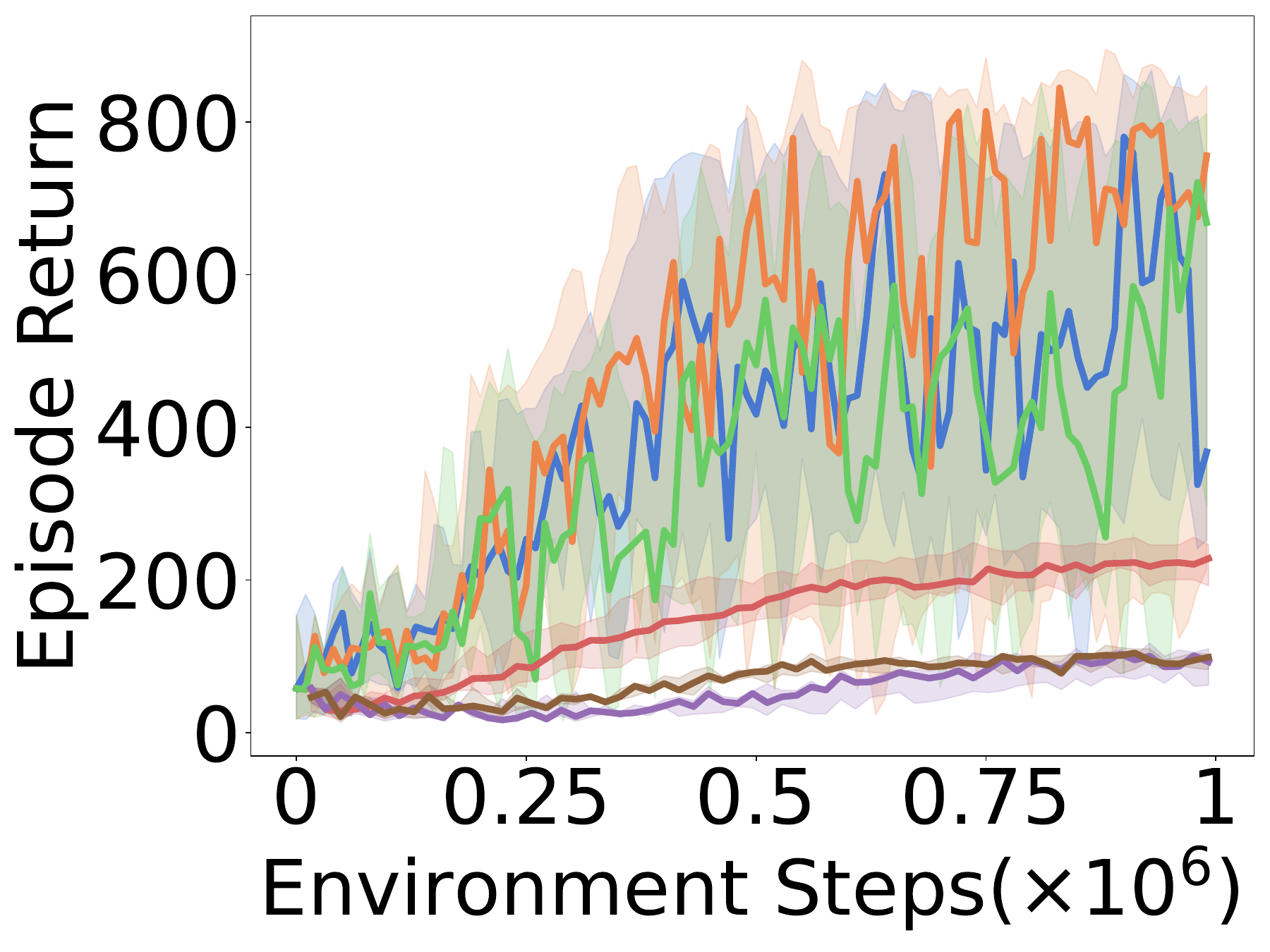}
    \caption{Quadruped-Run}
\end{subfigure}
\begin{subfigure}[b]{\textwidth}
    \centering
    \includegraphics[width=\textwidth]{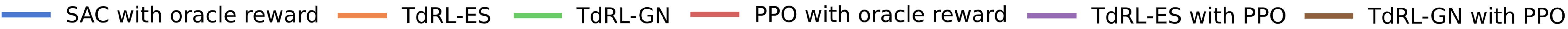}
\end{subfigure}

\caption{Performance comparison of algorithms on DM-Control tasks. Each algorithm runs with 10 different random seeds. Following \citeauthor{statistic_2021} \shortcite{statistic_2021}, the solid lines represent the interquartile mean (IQM) of episode returns while the shaded areas indicate 95\% confidence intervals.}
\label{fig:dm_control_performance}
\end{figure*}

\textbf{TdRL achieves a satisficing solution across multiple objectives rather than an optimal solution in a single metric.}
\cref{fig:fine-grained} shows that the performance comparison in multi-objectives between SAC with oracle reward and TdRL in the \textit{Walker-Run} task. 
In the \textit{Walker-Run} task, there are three primary objectives. The cosine of the torso angle should be within [0.9, 1], ensuring the robot's upper body remains upright. The torso height should exceed 1.2, ensuring the robot maintains a standing posture. The velocity in the x-axis should reach 8. The reward function for this task in DM-Control is defined as $((3*stand + upright) / 4) * (5*move + 1) / 6$, where $upright$, $stand$, and $move$ respectively represent the rewards of uprightness, standing height, and move speed of the robot. \cref{fig:fine-grained} shows that despite $stand$ having a higher weight than $upright$ in the reward function, the trained policy achieves the desired uprightness but fails to maintain sufficient standing height. 

In TdRL, however, for each of the three metrics - uprightness, stand height, and move speed - we constructed both pass-fail tests and indicative tests (the test functions are detailed in the Appendix \ref{app:tester}). Notably, we do not need to preset the weights to combine them; instead, the pass-fail tests define the passing conditions for each metric. \cref{fig:fine-grained} demonstrates that the TdRL policy fulfills all task metrics more effectively. While TdRL does not always achieve the optimum in individual metrics (e.g., upper-body uprightness), it consistently attains a satisficing solution across multiple objectives. In the \textit{Walker-Run} task, TdRL matches the speed of the SAC using oracle reward while satisfying both uprightness and stand height requirements.

\textbf{TdRL focuses on the performance of trajectories rather than the quality of state-action pairs.} \cref{fig:fine-grained} shows that the TdRL-trained policy underperforms the policy trained by SAC with oracle rewards across all metrics in stability. This discrepancy likely arises because TdRL operates at the trajectory level rather than evaluating state-action pairs. Such trajectory-level evaluation is more intuitive for designers, avoiding task objectives that overly emphasize state quality \cite{booth_2023}, which may lead to reward hacking \cite{amodei_2016}. Furthermore, TdRL can also enhance stability for specific metrics through introducing additional test functions (e.g., the variance of the robot's standing height).

\subsection{Ablation Study}

\textbf{Impact of reward learning approaches}
\cref{fig:ablation}(a) illustrates the impact of different reward learning approaches in the TdRL algorithm. 
TdRL with no penalty denotes the variant without applying $\mathcal L^{Penalty}_R$ to constrain the trajectory return function learning, while TdRL with direct reward learning refers to directly learning the reward function from trajectory comparisons rather than first learning the return function followed by decomposition. 
Experimental results demonstrate that both omitting the penalty term and directly learning the reward function lead to training instability and degraded policy performance. Specifically, the absence of a penalty causes uncontrolled growth in return values during learning, potentially inducing numerical instability. When directly learning the reward function, $tanh$ activation is typically applied in the reward network's output layer to bound its outputs, which requires continuous rescaling during later training stages to accommodate reward learning, ultimately resulting in training instability and performance deterioration.

\begin{figure}[htbp]
\centering
\begin{subfigure}[b]{0.49\columnwidth}
    \centering
    \includegraphics[width=\textwidth]{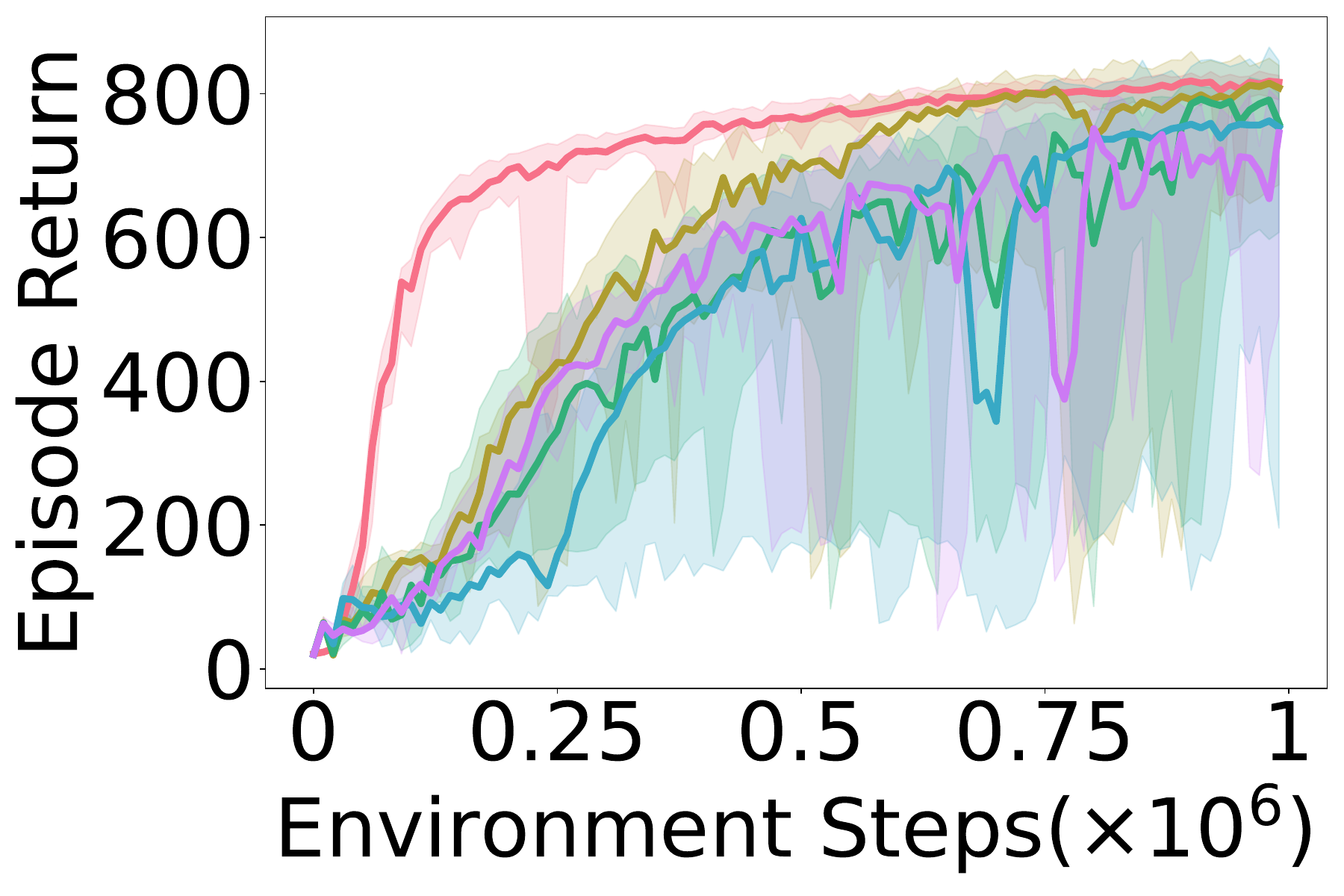}
    \includegraphics[width=\textwidth]{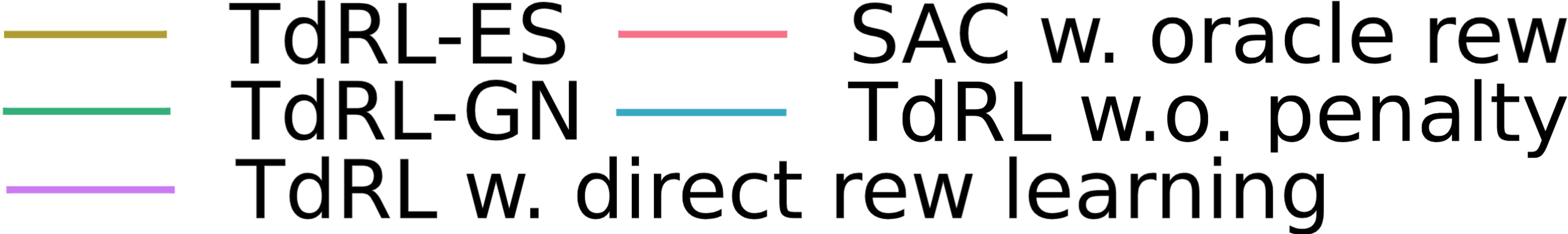}
    \caption{}
    \label{fig:reward_decompose}
\end{subfigure}
\begin{subfigure}[b]{0.49\columnwidth}
    \centering
    \includegraphics[width=\textwidth]{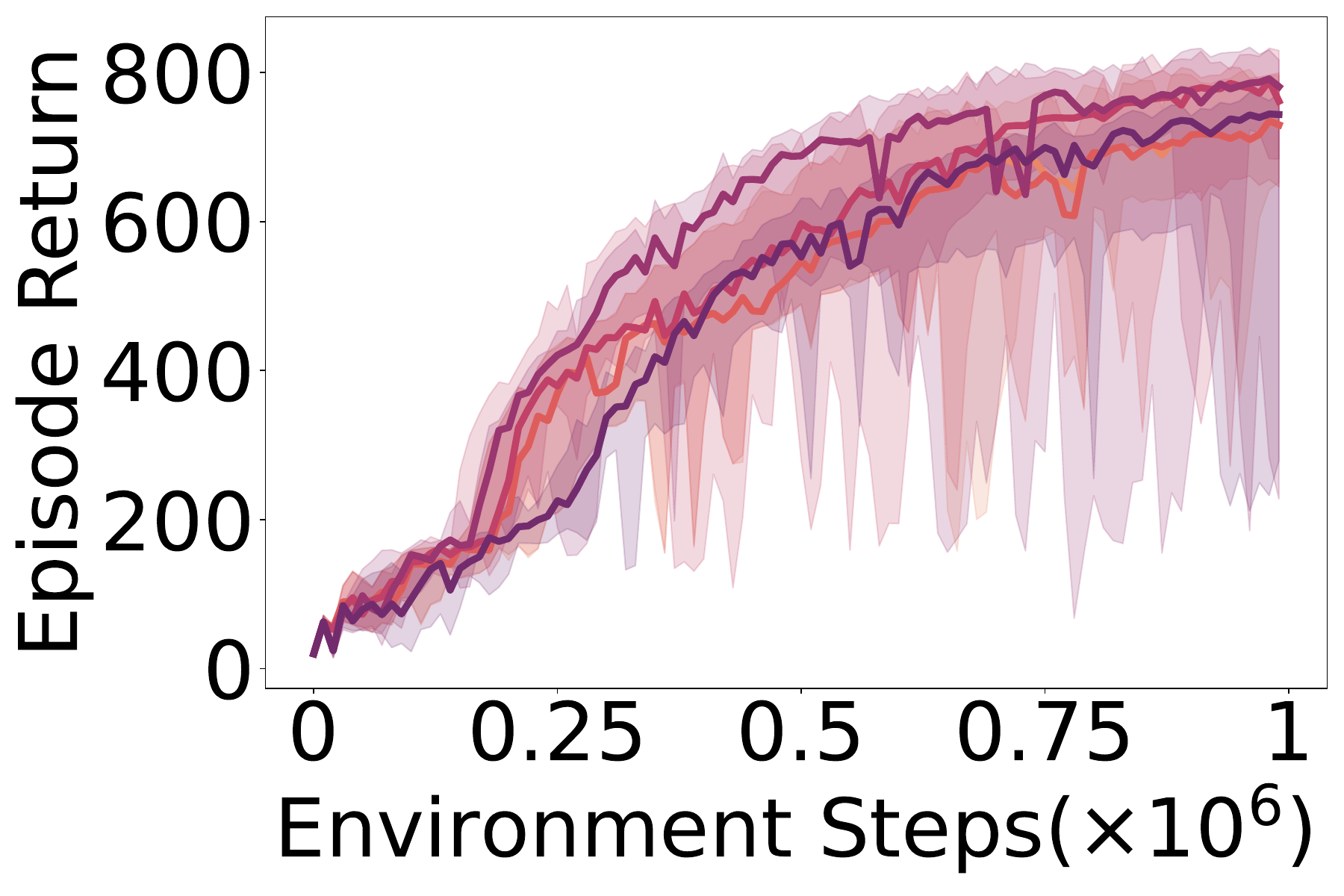}
    \includegraphics[width=0.8\textwidth]{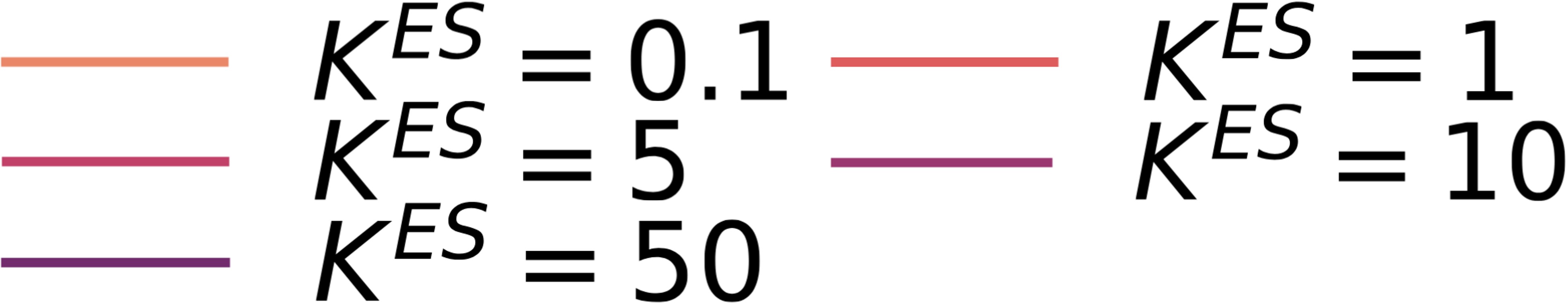}
    \caption{}
    \label{fig:hyperparameters}
\end{subfigure}
\caption{\textbf{Left:} the performance of TdRL with different reward learning methods. \textbf{Right:} the performance of TdRL-ES with varying values of multiple $K^{ES}$.}
\label{fig:ablation}
\end{figure}
 
\textbf{Hyperparameters choice} 
TdRL-ES stops training when the L2-norm of $\mathcal{L}_R^{Penalty}$'s gradient exceeds a predefined multiple ($K^{ES}$) of $\mathcal{L}_R^{Dis}$'s gradient L2-norm. The scaling factor $K^{ES}$ is a tunable parameter requiring configuration. We investigated how different multiple $K^{ES}$ values affect algorithm performance. \cref{fig:ablation}(b) indicates that both excessively large and small multiple $K^{ES}$ values degrade performance. Based on experimental results, we recommend setting multiple $K^{ES}$ to 10 as a general guideline.

\section{Conclusion}
\label{sec:conclusion}

To tackle the challenges in reward design for reinforcement learning, this paper introduces a test-driven reinforcement learning (TdRL) framework. Instead of relying on a scalar reward function, TdRL represents task objectives using multiple test functions. The test functions are defined on trajectories rather than state-action pairs, which is more intuitive for designers. Besides, each test function only needs to represent a single objective, and designers do not need to consider their weights, greatly reducing the complexity of the reinforcement learning task design.

We propose sufficient conditions for trajectory return functions to guarantee policy convergence to the optimal policy set.
Then, we introduce a lexicographic trajectory comparison approach for return learning. Furthermore, we present an implementation of TdRL, and the experimental results show that TdRL matches or outperforms reward-based methods in task training, with greater design simplicity and inherent support for multi-objective optimization. TdRL offers a viable approach to tackling reward design challenges in reinforcement learning.

\setcounter{secnumdepth}{0}
\section{Acknowledgements}

We thank all anonymous reviewers for their constructive feedback, which helped improve our paper. We also thank Mr. Yuxin Cheng from the University of Hong Kong for his helpful discussions.

\bibliography{reference}

@misc{ppo_2017,
  title={{Proximal} {Policy} {Optimization} {Algorithms}},
  author={John Schulman and Filip Wolski and Prafulla Dhariwal and Alec Radford and Oleg Klimov},
  journal={ArXiv},
  year={2017},
  volume={abs/1707.06347},
  url={https://api.semanticscholar.org/CorpusID:28695052}
}

@article{sac_2018,
  author       = {Tuomas Haarnoja and
                  Aurick Zhou and
                  Kristian Hartikainen and
                  George Tucker and
                  Sehoon Ha and
                  Jie Tan and
                  Vikash Kumar and
                  Henry Zhu and
                  Abhishek Gupta and
                  Pieter Abbeel and
                  Sergey Levine},
  title        = {Soft {Actor-Critic} {Algorithms} and {Applications}},
  journal      = {CoRR},
  volume       = {abs/1812.05905},
  year         = {2018},
  url          = {http://arxiv.org/abs/1812.05905},
  eprinttype    = {arXiv},
  eprint       = {1812.05905},
  timestamp    = {Tue, 01 Jan 2019 15:01:25 +0100},
  biburl       = {https://dblp.org/rec/journals/corr/abs-1812-05905.bib},
  bibsource    = {dblp computer science bibliography, https://dblp.org}
}

@inproceedings{statistic_2021,
author = {Agarwal, Rishabh and Schwarzer, Max and Castro, Pablo Samuel and Courville, Aaron and Bellemare, Marc G.},
title = {Deep reinforcement learning at the edge of the statistical precipice},
year = {2021},
isbn = {9781713845393},
publisher = {Curran Associates Inc.},
address = {Red Hook, NY, USA},
abstract = {Deep reinforcement learning (RL) algorithms are predominantly evaluated by comparing their relative performance on a large suite of tasks. Most published results on deep RL benchmarks compare point estimates of aggregate performance such as mean and median scores across tasks, ignoring the statistical uncertainty implied by the use of a finite number of training runs. Beginning with the Arcade Learning Environment (ALE), the shift towards computationally-demanding benchmarks has led to the practice of evaluating only a small number of runs per task, exacerbating the statistical uncertainty in point estimates. In this paper, we argue that reliable evaluation in the few-run deep RL regime cannot ignore the uncertainty in results without running the risk of slowing down progress in the field. We illustrate this point using a case study on the Atari 100k benchmark, where we find substantial discrepancies between conclusions drawn from point estimates alone versus a more thorough statistical analysis. With the aim of increasing the field's confidence in reported results with a handful of runs, we advocate for reporting interval estimates of aggregate performance and propose performance profiles to account for the variability in results, as well as present more robust and efficient aggregate metrics, such as interquartile mean scores, to achieve small uncertainty in results. Using such statistical tools, we scrutinize performance evaluations of existing algorithms on other widely used RL benchmarks including the ALE, Procgen, and the DeepMind Control Suite, again revealing discrepancies in prior comparisons. Our findings call for a change in how we evaluate performance in deep RL, for which we present a more rigorous evaluation methodology, accompanied with an open-source library rliable, to prevent unreliable results from stagnating the field.},
booktitle = {Proceedings of the 35th International Conference on Neural Information Processing Systems},
articleno = {2244},
numpages = {17},
series = {NIPS '21}
}

@incollection{lexicographic_2020,
  author    = {Özgür Şimşek},
  title     = {Lexicographic Decision Rule},
  booktitle = {Oxford Research Encyclopedia of Politics},
  year      = {2020},
  month     = oct,
  publisher = {Oxford University Press},
  doi       = {10.1093/acrefore/9780190228637.013.908},
  url       = {https://oxfordre.com/politics/view/10.1093/acrefore/9780190228637.001.0001/acrefore-9780190228637-e-908}
}

@article{satisficing_1947,
	title = {Administrative behavior; a study of decision-making processes in administrative organization.},
	abstract = {The author, a professor of political science, discusses points of underlying philosophy in business and governmental administration. Decision making, organizational influences and loyalties, administrative principles, and evaluation of executive decisions are leading topics. (PsycInfo Database Record (c) 2022 APA, all rights reserved)},
	journal = {Administrative behavior; a study of decision-making processes in administrative organization.},
	author = {Simon, H. A.},
	year = {1947},
	note = {Place: Oxford,  England
Publisher: Macmillan},
	keywords = {Commerce},
	pages = {xvi, 259--xvi, 259},
}

@article{llm_survey_2025,
  author={Cao, Yuji and Zhao, Huan and Cheng, Yuheng and Shu, Ting and Chen, Yue and Liu, Guolong and Liang, Gaoqi and Zhao, Junhua and Yan, Jinyue and Li, Yun},
  journal={IEEE Transactions on Neural Networks and Learning Systems}, 
  title={Survey on Large Language Model-Enhanced Reinforcement Learning: Concept, Taxonomy, and Methods}, 
  year={2025},
  volume={36},
  number={6},
  pages={9737-9757},
  keywords={Taxonomy;Surveys;Visualization;Reviews;Planning;Reinforcement learning;Games;Transformers;Natural language processing;Large language models;Large language models (LLMs);LLM-enhanced reinforcement learning (RL);multimodal RL;RL;vision-language models (VLMs)},
  doi={10.1109/TNNLS.2024.3497992}}

@article{irl_survey_2021,
title = {A survey of inverse reinforcement learning: Challenges, methods and progress},
journal = {Artificial Intelligence},
volume = {297},
pages = {103500},
year = {2021},
issn = {0004-3702},
doi = {https://doi.org/10.1016/j.artint.2021.103500},
url = {https://www.sciencedirect.com/science/article/pii/S0004370221000515},
author = {Saurabh Arora and Prashant Doshi},
keywords = {Reinforcement learning, Reward function, Learning from demonstration, Generalization, Learning accuracy, Survey},
abstract = {Inverse reinforcement learning (IRL) is the problem of inferring the reward function of an agent, given its policy or observed behavior. Analogous to RL, IRL is perceived both as a problem and as a class of methods. By categorically surveying the extant literature in IRL, this article serves as a comprehensive reference for researchers and practitioners of machine learning as well as those new to it to understand the challenges of IRL and select the approaches best suited for the problem on hand. The survey formally introduces the IRL problem along with its central challenges such as the difficulty in performing accurate inference and its generalizability, its sensitivity to prior knowledge, and the disproportionate growth in solution complexity with problem size. The article surveys a vast collection of foundational methods grouped together by the commonality of their objectives, and elaborates how these methods mitigate the challenges. We further discuss extensions to the traditional IRL methods for handling imperfect perception, an incomplete model, learning multiple reward functions and nonlinear reward functions. The article concludes the survey with a discussion of some broad advances in the research area and currently open research questions.}
}

@article{pbrl_open_problems_2023,
  author={Stephen Casper and Xander Davies and Claudia Shi and Thomas Krendl Gilbert and Jérémy Scheurer and Javier Rando and Rachel Freedman and Tomasz Korbak and David Lindner and Pedro Freire and Tony Tong Wang and Samuel Marks and Charbel-Raphaël Ségerie and Micah Carroll and Andi Peng and Phillip J. K. Christoffersen and Mehul Damani and Stewart Slocum and Usman Anwar and Anand Siththaranjan and Max Nadeau and Eric J. Michaud and Jacob Pfau and Dmitrii Krasheninnikov and Xin Chen and Lauro Langosco and Peter Hase and Erdem Biyik and Anca D. Dragan and David Krueger and Dorsa Sadigh and Dylan Hadfield-Menell},
  title={Open Problems and Fundamental Limitations of Reinforcement Learning from Human Feedback},
  year={2023},
  cdate={1672531200000},
  journal={Trans. Mach. Learn. Res.},
  volume={2023},
  url={https://openreview.net/forum?id=bx24KpJ4Eb}
}

@inproceedings{pbrl_llm_2022,
author = {Ouyang, Long and Wu, Jeff and Jiang, Xu and Almeida, Diogo and Wainwright, Carroll L. and Mishkin, Pamela and Zhang, Chong and Agarwal, Sandhini and Slama, Katarina and Ray, Alex and Schulman, John and Hilton, Jacob and Kelton, Fraser and Miller, Luke and Simens, Maddie and Askell, Amanda and Welinder, Peter and Christiano, Paul and Leike, Jan and Lowe, Ryan},
title = {Training language models to follow instructions with human feedback},
year = {2022},
isbn = {9781713871088},
publisher = {Curran Associates Inc.},
address = {Red Hook, NY, USA},
abstract = {Making language models bigger does not inherently make them better at following a user's intent. For example, large language models can generate outputs that are untruthful, toxic, or simply not helpful to the user. In other words, these models are not aligned with their users. In this paper, we show an avenue for aligning language models with user intent on a wide range of tasks by fine-tuning with human feedback. Starting with a set of labeler-written prompts and prompts submitted through a language model API, we collect a dataset of labeler demonstrations of the desired model behavior, which we use to fine-tune GPT-3 using supervised learning. We then collect a dataset of rankings of model outputs, which we use to further fine-tune this supervised model using reinforcement learning from human feedback. We call the resulting models InstructGPT. In human evaluations on our prompt distribution, outputs from the 1.3B parameter InstructGPT model are preferred to outputs from the 175B GPT-3, despite having 100x fewer parameters. Moreover, InstructGPT models show improvements in truthfulness and reductions in toxic output generation while having minimal performance regressions on public NLP datasets. Even though InstructGPT still makes simple mistakes, our results show that fine-tuning with human feedback is a promising direction for aligning language models with human intent.},
booktitle = {Proceedings of the 36th International Conference on Neural Information Processing Systems},
articleno = {2011},
numpages = {15},
location = {New Orleans, LA, USA},
series = {NIPS '22}
}

@inproceedings{maxent_irl_2008,
author = {Ziebart, Brian D. and Maas, Andrew and Bagnell, J. Andrew and Dey, Anind K.},
title = {Maximum entropy inverse reinforcement learning},
year = {2008},
isbn = {9781577353683},
publisher = {AAAI Press},
abstract = {Recent research has shown the benefit of framing problems of imitation learning as solutions to Markov Decision Problems. This approach reduces learning to the problem of recovering a utility function that makes the behavior induced by a near-optimal policy closely mimic demonstrated behavior. In this work, we develop a probabilistic approach based on the principle of maximum entropy. Our approach provides a well-defined, globally normalized distribution over decision sequences, while providing the same performance guarantees as existing methods.We develop our technique in the context of modeling real-world navigation and driving behaviors where collected data is inherently noisy and imperfect. Our probabilistic approach enables modeling of route preferences as well as a powerful new approach to inferring destinations and routes based on partial trajectories.},
booktitle = {Proceedings of the 23rd National Conference on Artificial Intelligence - Volume 3},
pages = {1433–1438},
numpages = {6},
location = {Chicago, Illinois},
series = {AAAI'08}
}

@article{dmcontrol_2020,
         title = {dm\_control: Software and tasks for continuous control},
         journal = {Software Impacts},
         volume = {6},
         pages = {100022},
         year = {2020},
         issn = {2665-9638},
         doi = {https://doi.org/10.1016/j.simpa.2020.100022},
         url = {https://www.sciencedirect.com/science/article/pii/S2665963820300099},
         author = {Saran Tunyasuvunakool and Alistair Muldal and Yotam Doron and
                   Siqi Liu and Steven Bohez and Josh Merel and Tom Erez and
                   Timothy Lillicrap and Nicolas Heess and Yuval Tassa},
}

@misc{pbrl_2017,
      title={Deep reinforcement learning from human preferences}, 
      author={Paul Christiano and Jan Leike and Tom B. Brown and Miljan Martic and Shane Legg and Dario Amodei},
      year={2017},
      eprint={1706.03741},
      archivePrefix={arXiv},
      primaryClass={stat.ML},
      url={https://arxiv.org/abs/1706.03741}, 
}

@InProceedings{pebble_2021,
  title = 	 {PEBBLE: Feedback-Efficient Interactive Reinforcement Learning via Relabeling Experience and Unsupervised Pre-training},
  author =       {Lee, Kimin and Smith, Laura M and Abbeel, Pieter},
  booktitle = 	 {Proceedings of the 38th International Conference on Machine Learning},
  pages = 	 {6152--6163},
  year = 	 {2021},
  editor = 	 {Meila, Marina and Zhang, Tong},
  volume = 	 {139},
  series = 	 {Proceedings of Machine Learning Research},
  month = 	 {18--24 Jul},
  publisher =    {PMLR},
  pdf = 	 {http://proceedings.mlr.press/v139/lee21i/lee21i.pdf},
  url = 	 {https://proceedings.mlr.press/v139/lee21i.html},
  abstract = 	 {Conveying complex objectives to reinforcement learning (RL) agents can often be difficult, involving meticulous design of reward functions that are sufficiently informative yet easy enough to provide. Human-in-the-loop RL methods allow practitioners to instead interactively teach agents through tailored feedback; however, such approaches have been challenging to scale since human feedback is very expensive. In this work, we aim to make this process more sample- and feedback-efficient. We present an off-policy, interactive RL algorithm that capitalizes on the strengths of both feedback and off-policy learning. Specifically, we learn a reward model by actively querying a teacher’s preferences between two clips of behavior and use it to train an agent. To enable off-policy learning, we relabel all the agent’s past experience when its reward model changes. We additionally show that pre-training our agents with unsupervised exploration substantially increases the mileage of its queries. We demonstrate that our approach is capable of learning tasks of higher complexity than previously considered by human-in-the-loop methods, including a variety of locomotion and robotic manipulation skills. We also show that our method is able to utilize real-time human feedback to effectively prevent reward exploitation and learn new behaviors that are difficult to specify with standard reward functions.}
}

@inproceedings{surf_2022,
title={{SURF}: Semi-supervised Reward Learning with Data Augmentation for Feedback-efficient Preference-based Reinforcement Learning},
author={Jongjin Park and Younggyo Seo and Jinwoo Shin and Honglak Lee and Pieter Abbeel and Kimin Lee},
booktitle={International Conference on Learning Representations},
year={2022},
url={https://openreview.net/forum?id=TfhfZLQ2EJO}
}

@misc{heron_2024,
      title={Deep Reinforcement Learning from Hierarchical Preference Design}, 
      author={Alexander Bukharin and Yixiao Li and Pengcheng He and Tuo Zhao},
      year={2024},
      eprint={2309.02632},
      archivePrefix={arXiv},
      primaryClass={cs.LG},
      url={https://arxiv.org/abs/2309.02632}, 
}

@inproceedings{rewardlm_2023,
title={Reward Design with Language Models},
author={Minae Kwon and Sang Michael Xie and Kalesha Bullard and Dorsa Sadigh},
booktitle={The Eleventh International Conference on Learning Representations },
year={2023},
url={https://openreview.net/forum?id=10uNUgI5Kl}
}

@inproceedings{l2r_2023,
title={Language to Rewards for Robotic Skill Synthesis},
author={Wenhao Yu and Nimrod Gileadi and Chuyuan Fu and Sean Kirmani and Kuang-Huei Lee and Montserrat Gonzalez Arenas and Hao-Tien Lewis Chiang and Tom Erez and Leonard Hasenclever and Jan Humplik and brian ichter and Ted Xiao and Peng Xu and Andy Zeng and Tingnan Zhang and Nicolas Heess and Dorsa Sadigh and Jie Tan and Yuval Tassa and Fei Xia},
booktitle={7th Annual Conference on Robot Learning},
year={2023},
url={https://openreview.net/forum?id=SgTPdyehXMA}
}

@InProceedings{ellm_2023,
  title = 	 {Guiding Pretraining in Reinforcement Learning with Large Language Models},
  author =       {Du, Yuqing and Watkins, Olivia and Wang, Zihan and Colas, C\'{e}dric and Darrell, Trevor and Abbeel, Pieter and Gupta, Abhishek and Andreas, Jacob},
  booktitle = 	 {Proceedings of the 40th International Conference on Machine Learning},
  pages = 	 {8657--8677},
  year = 	 {2023},
  editor = 	 {Krause, Andreas and Brunskill, Emma and Cho, Kyunghyun and Engelhardt, Barbara and Sabato, Sivan and Scarlett, Jonathan},
  volume = 	 {202},
  series = 	 {Proceedings of Machine Learning Research},
  month = 	 {23--29 Jul},
  publisher =    {PMLR},
  pdf = 	 {https://proceedings.mlr.press/v202/du23f/du23f.pdf},
  url = 	 {https://proceedings.mlr.press/v202/du23f.html},
  abstract = 	 {Reinforcement learning algorithms typically struggle in the absence of a dense, well-shaped reward function. Intrinsically motivated exploration methods address this limitation by rewarding agents for visiting novel states or transitions, but these methods offer limited benefits in large environments where most discovered novelty is irrelevant for downstream tasks. We describe a method that uses background knowledge from text corpora to shape exploration. This method, called ELLM (Exploring with LLMs) rewards an agent for achieving goals suggested by a language model prompted with a description of the agent’s current state. By leveraging large-scale language model pretraining, ELLM guides agents toward human-meaningful and plausibly useful behaviors without requiring a human in the loop. We evaluate ELLM in the Crafter game environment and the Housekeep robotic simulator, showing that ELLM-trained agents have better coverage of common-sense behaviors during pretraining and usually match or improve performance on a range of downstream tasks.}
}

@book{tdd_2002,
author = {Beck},
title = {Test Driven Development: By Example},
year = {2002},
isbn = {0321146530},
publisher = {Addison-Wesley Longman Publishing Co., Inc.},
address = {USA},
abstract = {From the Book: Clean code that works is Ron Jeffries' pithy phrase. The goal is clean code that works, and for a whole bunch of reasons: Clean code that works is a predictable way to develop. You know when you are finished, without having to worry about a long bug trail. Clean code that works gives you a chance to learn all the lessons that the code has to teach you. If you only ever slap together the first thing you think of, you never have time to think of a second, better, thing. Clean code that works improves the lives of users of our software. Clean code that works lets your teammates count on you, and you on them. Writing clean code that works feels good.But how do you get to clean code that works Many forces drive you away from clean code, and even code that works. Without taking too much counsel of our fears, here's what we dodrive development with automated tests, a style of development called Test-Driven Development (TDD for short). In Test-Driven Development, you: Write new code only if you first have a failing automated test. Eliminate duplication. Two simple rules, but they generate complex individual and group behavior. Some of the technical implications are: You must design organically, with running code providing feedback between decisions You must write your own tests, since you can't wait twenty times a day for someone else to write a test Your development environment must provide rapid response to small changes Your designs must consist of many highly cohesive, loosely coupled components, just to make testing easy The two rules imply an order to the tasks ofprogramming: 1. Redwrite a little test that doesn't work, perhaps doesn't even compile at first 2. Greenmake the test work quickly, committing whatever sins necessary in the process 3. Refactoreliminate all the duplication created in just getting the test to work Red/green/refactor. The TDD's mantra. Assuming for the moment that such a style is possible, it might be possible to dramatically reduce the defect density of code and make the subject of work crystal clear to all involved. If so, writing only code demanded by failing tests also has social implications: If the defect density can be reduced enough, QA can shift from reactive to pro-active work If the number of nasty surprises can be reduced enough, project managers can estimate accurately enough to involve real customers in daily development If the topics of technical conversations can be made clear enough, programmers can work in minute-by-minute collaboration instead of daily or weekly collaboration Again, if the defect density can be reduced enough, we can have shippable software with new functionality every day, leading to new business relationships with customers So, the concept is simple, but what's my motivation Why would a programmer take on the additional work of writing automated tests Why would a programmer work in tiny little steps when their mind is capable of great soaring swoops of design Courage.Courage Test-driven development is a way of managing fear during programming. I don't mean fear in a bad way, pow widdle prwogwammew needs a pacifiew, but fear in the legitimate, this-is-a-hard-problem-and-I-can't-see-the-end-from-the-beginning sense. If pain is nature's way of saying Stop!, fear is nature's way of saying Be careful. Being careful is good, but fear has a host of other effects: Makes you tentative Makes you want to communicate less Makes you shy from feedback Makes you grumpy None of these effects are helpful when programming, especially when programming something hard. So, how can you face a difficult situation and: Instead of being tentative, begin learning concretely as quickly as possible. Instead of clamming up, communicate more clearly. Instead of avoiding feedback, search out helpful, concrete feedback. (You'll have to work on grumpiness on your own.) Imagine programming as turning a crank to pull a bucket of water from a well. When the bucket is small, a free-spinning crank is fine. When the bucket is big and full of water, you're going to get tired before the bucket is all the way up. You need a ratchet mechanism to enable you to rest between bouts of cranking. The heavier the bucket, the closer the teeth need to be on the ratchet. The tests in test-driven development are the teeth of the ratchet. Once you get one test working, you know it is working, now and forever. You are one step closer to having everything working than you were when the test was broken. Now get the next one working, and the next, and the next. By analogy, the tougher the programming problem, the less ground should be covered by each test. Readers of Extreme Programming Explained will notice a difference in tone between XP and TDD. TDD isn't an absolute like Extreme Programming. XP says, Here are things you must be able to do to be prepared to evolve further. TDD is a little fuzzier. TDD is an awareness of the gap between decision and feedback during programming, and techniques to control that gap. What if I do a paper design for a week, then test-drive the code Is that TDD Sure, it's TDD. You were aware of the gap between decision and feedback and you controlled the gap deliberately. That said, most people who learn TDD find their programming practice changed for good. Test Infected is the phrase Erich Gamma coined to describe this shift. You might find yourself writing more tests earlier, and working in smaller steps than you ever dreamed would be sensible. On the other hand, some programmers learn TDD and go back to their earlier practices, reserving TDD for special occasions when ordinary programming isn't making progress. There are certainly programming tasks that can't be driven solely by tests (or at least, not yet). Security software and concurrency, for example, are two topics where TDD is not sufficient to mechanically demonstrate that the goals of the software have been met. Security relies on essentially defect-free code, true, but also on human judgement about the methods used to secure the software. Subtle concurrency problems can't be reliably duplicated by running the code. Once you are finished reading this book, you should be ready to: Start simply Write automated tests Refactor to add design decisions one at a time This book is organized into three sections. An example of writing typical model code using TDD. The example is one I got from Ward Cunningham years ago, and have used many times since, multi-currency arithmetic. In it you will learn to write tests before code and grow a design organically. An example of testing more complicated logic, including reflection and exceptions, by developing a framework for automated testing. This example also serves to introduce you to the xUnit architecture that is at the heart of many programmer-oriented testing tools. In the second example you will learn to work in even smaller steps than in the first example, including the kind of self-referential hooha beloved of computer scientists. Patterns for TDD. Included are patterns for the deciding what tests to write, how to write tests using xUnit, and a greatest hits selection of the design patterns and refactorings used in the examples. I wrote the examples imagining a pair programming session. If you like looking at the map before wandering around, you may want to go straight to the patterns in Section 3 and use the examples as illustrations. If you prefer just wandering around and then looking at the map to see where you've been, try reading the examples through and refering to the patterns when you want more detail about a technique, then using the patterns as a reference. Several reviewers have commented they got the most out of the examples when they started up a programming environment and entered the code and ran the tests as they read. A note about the examples. Both examples, multi-currency calculation and a testing framework, appear simple. There are (and I have seen) complicated, ugly, messy ways of solving the same problems. I could have chosen one of those complicated, ugly, messy solutions to give the book an air of reality. However, my goal, and I hope your goal, is to write clean code that works. Before teeing off on the examples as being too simple, spend 15 seconds imagining a programming world in which all code was this clear and direct, where there were no complicated solutions, only apparently complicated problems begging for careful thought. TDD is a practice that can help you lead yourself to exactly that careful thought.}
}

@article{tddreward_2022,
title = {Test-Driven Reward Function for Reinforcement Learning: A Contribution towards Applicable Machine Learning Algorithms for Production Systems},
journal = {Procedia CIRP},
volume = {112},
pages = {103-108},
year = {2022},
note = {15th CIRP Conference on Intelligent Computation in ManufacturingEngineering, 14-16 July 2021},
issn = {2212-8271},
doi = {https://doi.org/10.1016/j.procir.2022.09.043},
url = {https://www.sciencedirect.com/science/article/pii/S2212827122011970},
author = {Florian Jaensch and Karl Kübler and Elmar Schwarz and Alexander Verl},
keywords = {Virtual Commissioning Simulation, Test-Driven Development, Reinforcement Learning},
abstract = {Reinforcement Learning algorithms find more and more application in fields where complex tasks need to be solved. The automation of production systems is one of those fields. Normally, programming a control system defines the automation strategy. Previous contributions by the authors have shown that a so-called agent can learn automation strategies for production systems using a Reinforcement Learning setup. However, the development of the reward function for the agent can be challenging and needs Reinforcement Learning domain knowledge. This paper introduces a novel approach in combining Test-Driven Development with Reinforcement Learning in order to solve the problem of a suitable reward function. In the presented approach predefined test cases are used to derive rewards for the agent. The use of an automated test framework allows for continuous learning sequences until all test cases are passed. An application example of a robot cell is used to demonstrate the novel approach and verify its suitability and usability. The first application shows promising results for further examination towards more application scenarios.}
}

@INPROCEEDINGS{tddirl_2024,
  author={Fischer, Johannes and Werling, Moritz and Lauer, Martin and Stiller, Christoph},
  booktitle={2024 IEEE Intelligent Vehicles Symposium (IV)}, 
  title={Test-Driven Inverse Reinforcement Learning Using Scenario-Based Testing}, 
  year={2024},
  volume={},
  number={},
  pages={827-834},
  keywords={Intelligent vehicles;Reinforcement learning;Cost function;Complexity theory;Bayes methods;Testing},
  doi={10.1109/IV55156.2024.10588652}
}

@inproceedings{testimportance_2024,
title={Test Where Decisions Matter: Importance-driven Testing for Deep Reinforcement Learning},
author={Stefan Pranger and Hana Chockler and Martin Tappler and Bettina K{\"o}nighofer},
booktitle={The Thirty-eighth Annual Conference on Neural Information Processing Systems},
year={2024},
url={https://openreview.net/forum?id=TwrnhZfD6a}
}

@article{silver_2021,
title = {Reward is enough},
journal = {Artificial Intelligence},
volume = {299},
pages = {103535},
year = {2021},
issn = {0004-3702},
doi = {https://doi.org/10.1016/j.artint.2021.103535},
url = {https://www.sciencedirect.com/science/article/pii/S0004370221000862},
author = {David Silver and Satinder Singh and Doina Precup and Richard S. Sutton},
keywords = {Artificial intelligence, Artificial general intelligence, Reinforcement learning, Reward},
abstract = {In this article we hypothesise that intelligence, and its associated abilities, can be understood as subserving the maximisation of reward. Accordingly, reward is enough to drive behaviour that exhibits abilities studied in natural and artificial intelligence, including knowledge, learning, perception, social intelligence, language, generalisation and imitation. This is in contrast to the view that specialised problem formulations are needed for each ability, based on other signals or objectives. Furthermore, we suggest that agents that learn through trial and error experience to maximise reward could learn behaviour that exhibits most if not all of these abilities, and therefore that powerful reinforcement learning agents could constitute a solution to artificial general intelligence.}
}

@article{silver_2025,
  title={Welcome to the era of experience},
  author={Silver, David and Sutton, Richard S},
  journal={Google AI},
  volume={1},
  year={2025}
}

@book{sutton_rl_2018,
	address = {Cambridge, MA, USA},
	title = {Reinforcement {Learning}: {An} {Introduction}},
	isbn = {978-0-262-03924-6},
	shorttitle = {Reinforcement {Learning}},
	abstract = {The significantly expanded and updated new edition of a widely used text on reinforcement learning, one of the most active research areas in artificial intelligence. Reinforcement learning, one of the most active research areas in artificial intelligence, is a computational approach to learning whereby an agent tries to maximize the total amount of reward it receives while interacting with a complex, uncertain environment. In Reinforcement Learning, Richard Sutton and Andrew Barto provide a clear and simple account of the field's key ideas and algorithms. This second edition has been significantly expanded and updated, presenting new topics and updating coverage of other topics. Like the first edition, this second edition focuses on core online learning algorithms, with the more mathematical material set off in shaded boxes. Part I covers as much of reinforcement learning as possible without going beyond the tabular case for which exact solutions can be found. Many algorithms presented in this part are new to the second edition, including UCB, Expected Sarsa, and Double Learning. Part II extends these ideas to function approximation, with new sections on such topics as artificial neural networks and the Fourier basis, and offers expanded treatment of off-policy learning and policy-gradient methods. Part III has new chapters on reinforcement learning's relationships to psychology and neuroscience, as well as an updated case-studies chapter including AlphaGo and AlphaGo Zero, Atari game playing, and IBM Watson's wagering strategy. The final chapter discusses the future societal impacts of reinforcement learning.},
	publisher = {A Bradford Book},
	author = {Sutton, Richard S. and Barto, Andrew G.},
	month = oct,
	year = {2018},
}

@article{knox_misdesign_2023,
  title = {Reward (Mis)design for autonomous driving},
  journal = {Artificial Intelligence},
  volume = {316},
  pages = {103829},
  year = {2023},
  issn = {0004-3702},
  doi = {https://doi.org/10.1016/j.artint.2022.103829},
  url = {https://www.sciencedirect.com/science/article/pii/S0004370222001692},
  author = {W. Bradley Knox and Alessandro Allievi and Holger Banzhaf and Felix Schmitt and Peter Stone},
  keywords = {Reinforcement learning, Reward design, Utility, Cost, Safety, Risk, Autonomous driving},
  abstract = {This article considers the problem of diagnosing certain common errors in reward design. Its insights are also applicable to the design of cost functions and performance metrics more generally. To diagnose common errors, we develop 8 simple sanity checks for identifying flaws in reward functions. We survey research that is published in top-tier venues and focuses on reinforcement learning (RL) for autonomous driving (AD). Specifically, we closely examine the reported reward function in each publication and present these reward functions in a complete and standardized format in the appendix. Wherever we have sufficient information, we apply the 8 sanity checks to each surveyed reward function, revealing near-universal flaws in reward design for AD that might also exist pervasively across reward design for other tasks. Lastly, we explore promising directions that may aid the design of reward functions for AD in subsequent research, following a process of inquiry that can be adapted to other domains.}
}

@article{booth_2023, 
  title={The Perils of Trial-and-Error Reward Design: Misdesign through Overfitting and Invalid Task Specifications}, 
  volume={37}, 
  url={https://ojs.aaai.org/index.php/AAAI/article/view/25733}, DOI={10.1609/aaai.v37i5.25733}, 
  abstractNote={In reinforcement learning (RL), a reward function that aligns exactly with a task’s true performance metric is often necessarily sparse. For example, a true task metric might encode a reward of 1 upon success and 0 otherwise. The sparsity of these true task metrics can make them hard to learn from, so in practice they are often replaced with alternative dense reward functions. These dense reward functions are typically designed by experts through an ad hoc process of trial and error. In this process, experts manually search for a reward function that improves performance with respect to the task metric while also enabling an RL algorithm to learn faster. This process raises the question of whether the same reward function is optimal for all algorithms, i.e., whether the reward function can be overfit to a particular algorithm. In this paper, we study the consequences of this wide yet unexamined practice of trial-and-error reward design. We first conduct computational experiments that confirm that reward functions can be overfit to learning algorithms and their hyperparameters. We then conduct a controlled observation study which emulates expert practitioners’ typical experiences of reward design, in which we similarly find evidence of reward function overfitting. We also find that experts’ typical approach to reward design---of adopting a myopic strategy and weighing the relative goodness of each state-action pair---leads to misdesign through invalid task specifications, since RL algorithms use cumulative reward rather than rewards for individual state-action pairs as an optimization target. Code, data: github.com/serenabooth/reward-design-perils}, 
  number={5}, 
  journal={Proceedings of the AAAI Conference on Artificial Intelligence}, 
  author={Booth, Serena and Knox, W. Bradley and Shah, Julie and Niekum, Scott and Stone, Peter and Allievi, Alessandro}, 
  year={2023}, 
  month={Jun.}, 
  pages={5920-5929} 
}

@article{rajagopal_2023,
  title={What’s Next if Reward is Enough? Insights for AGI from Animal Reinforcement Learning},
  author={Rajagopal, Shreya},
  journal={Journal of Artificial General Intelligence},
  volume={14},
  number={1},
  pages={15--40},
  year={2023},
  publisher={De Gruyter Poland}
}

@misc{amodei_2016,
      title={Concrete Problems in AI Safety}, 
      author={Dario Amodei and Chris Olah and Jacob Steinhardt and Paul Christiano and John Schulman and Dan Mané},
      year={2016},
      eprint={1606.06565},
      archivePrefix={arXiv},
      primaryClass={cs.AI},
      url={https://arxiv.org/abs/1606.06565}, 
}

@inproceedings{peter_2023,
author = {Vamplew, Peter and Smith, Benjamin J. and K\"{a}llstr\"{o}m, Johan and Ramos, Gabriel and R\u{a}dulescu, Roxana and Roijers, Diederik M. and Hayes, Conor F. and Hentz, Friedrik and Mannion, Patrick and Libin, Pieter J.K. and Dazeley, Richard and Foale, Cameron},
title = {Scalar Reward is Not Enough},
year = {2023},
isbn = {9781450394321},
publisher = {International Foundation for Autonomous Agents and Multiagent Systems},
address = {Richland, SC},
abstract = {Silver et al. (2021) posit that scalar reward maximisation is sufficient to underpin all intelligence and provides a suitable basis for artificial general intelligence (AGI). This extended abstract summarises the counter-argument from our JAAMAS paper.},
booktitle = {Proceedings of the 2023 International Conference on Autonomous Agents and Multiagent Systems},
pages = {839–841},
numpages = {3},
keywords = {AGI, reinforcement learning, scalar rewards, vector rewards},
location = {London, United Kingdom},
series = {AAMAS '23}
}

@article{bradley_1952,
  title={Rank analysis of incomplete block designs: I. the method of paired comparisons},
  author={Bradley, Ralph Allan and Terry, Milton E},
  journal={Biometrika},
  volume={39},
  number={3/4},
  pages={324--345},
  year={1952},
  publisher={JSTOR}
}

@book{villani_2008,
  title={Optimal transport: old and new},
  author={Villani, C{\'e}dric and others},
  volume={338},
  year={2008},
  publisher={Springer}
}

@article{Juechems_2019,
title = {Where Does Value Come From?},
journal = {Trends in Cognitive Sciences},
volume = {23},
number = {10},
pages = {836-850},
year = {2019},
issn = {1364-6613},
doi = {https://doi.org/10.1016/j.tics.2019.07.012},
url = {https://www.sciencedirect.com/science/article/pii/S1364661319302001},
author = {Keno Juechems and Christopher Summerfield},
keywords = {reinforcement learning, value, homeostasis, reward, goal-directed decision-making, medial prefrontal cortex},
abstract = {The computational framework of reinforcement learning (RL) has allowed us to both understand biological brains and build successful artificial agents. However, in this opinion, we highlight open challenges for RL as a model of animal behaviour in natural environments. We ask how the external reward function is designed for biological systems, and how we can account for the context sensitivity of valuation. We summarise both old and new theories proposing that animals track current and desired internal states and seek to minimise the distance to a goal across multiple value dimensions. We suggest that this framework readily accounts for canonical phenomena observed in the fields of psychology, behavioural ecology, and economics, and recent findings from brain-imaging studies of value-guided decision-making.}
}

@article{gigerenzer_1996,
  title={Reasoning the fast and frugal way: models of bounded rationality.},
  author={Gigerenzer, Gerd and Goldstein, Daniel G},
  journal={Psychological review},
  volume={103},
  number={4},
  pages={650},
  year={1996},
  publisher={American Psychological Association}
}

@article{kk_2013,
author = {Katsikopoulos, Konstantinos V.},
title = {Why Do Simple Heuristics Perform Well in Choices with Binary Attributes?},
journal = {Decision Analysis},
volume = {10},
number = {4},
pages = {327-340},
year = {2013},
doi = {10.1287/deca.2013.0281},
URL = { 
        https://doi.org/10.1287/deca.2013.0281
},
eprint = { 
        https://doi.org/10.1287/deca.2013.0281
}
,
    abstract = { Simple heuristics, such as deterministic elimination by aspects (DEBA) and equal weighting of attributes with DEBA as a tiebreaker (EW/DEBA), have been found to perform curiously well in choosing one out of many alternatives based on a few binary attributes. DEBA and EW/DEBA sometimes achieve near-perfect performance and complement each other (if one is wrong or does not apply, the other is correct). Here, these findings are confirmed and extended; most importantly, a theory is presented that explains them. The theory allows calculating the performance of any model, for any number of binary attributes, for any preferences of the decision maker, for all sizes of the consideration set, and for sampling alternatives with as well as without replacement. Calculations based on the theory organize previous empirical findings and provide new surprising results. For example, the performance of both DEBA and EW/DEBA is a U-shaped function of the size of the consideration set and converges relatively quickly to perfection as the size of the consideration set increases (this result holds even when the preferences of the decision maker are worst-case scenarios for the performance of the heuristics). An explanation for why DEBA and EW/DEBA complement each other is also provided. Finally, the need for a unified theory of multiattribute choice and cue-based judgment is discussed. }
}

@book{kk_2021,
    author = {Katsikopoulos, Konstantinos V. and \c{S}im\c{s}ek, {\"O}zg{\"u}r and Buckmann, Marcus and Gigerenzer, Gerd},
    title = {Classification in the Wild: The Science and Art of Transparent Decision Making},
    publisher = {The MIT Press},
    year = {2021},
    month = {02},
    abstract = {Rules for building formal models that use fast-and-frugal heuristics, extending the psychological study of classification to the real world of uncertainty.This book focuses on classification—allocating objects into categories—“in the wild,” in real-world situations and far from the certainty of the lab. In the wild, unlike in typical psychological experiments, the future is not knowable and uncertainty cannot be meaningfully reduced to probability. Connecting the science of heuristics with machine learning, the book shows how to create formal models using classification rules that are simple, fast, and transparent and that can be as accurate as mathematically sophisticated algorithms developed for machine learning. The authors—whose individual expertise ranges from empirical psychology to mathematical modeling to artificial intelligence and data science—offer real-world examples, including voting, HIV screening, and magistrate decision making; present an accessible guide to inducing the models statistically; compare the performance of such models to machine learning algorithms when applied to problems that include predicting diabetes or bank failure; and discuss conceptual and historical connections to cognitive psychology. Finally, they analyze such challenging safety-related applications as decreasing civilian casualties in checkpoints and regulating investment banks.},
    isbn = {9780262363228},
    doi = {10.7551/mitpress/11790.001.0001},
    url = {https://doi.org/10.7551/mitpress/11790.001.0001},
}

\appendix
\setcounter{secnumdepth}{2}
\section{Limitation}

Our work has several limitations. First, our theoretical analysis assumes the maximum entropy policy optimization framework. While empirical results show that TdRL works well with other RL algorithms (e.g., PPO), it lacks theoretical guarantees beyond maximum entropy optimization. Second, our trajectory comparison relies on a lexicographic heuristic. Future studies should investigate methods with stronger theoretical foundations. Moreover, the test functions employed in this study are manually designed. Future work could explore integrating large language models to assist test function design.

\section{TdRL Algorithm Analysis}
\label{app:tdrl_alalysis}

\textbf{Computational Efficiency Analysis}. For each trajectory, its test results are deterministic and fixed during training, so caching these results can reduce computational overhead. However, in trajectory comparison, the order of test functions may change as training progresses, necessitating re-comparing the trajectories whenever the reward network is updated. The primary computational overhead of TdRL arises from these comparisons. Given $M$ reward network updates, each involving $N$ trajectory pairs, the total computational overhead is $\mathcal O(M N)$. Moreover, since trajectory comparisons are mutually independent, they can be executed in parallel to improve computational efficiency.

\textbf{Does TdRL eliminate the burden of reward design}? Firstly, the transformation of problems often leads to more diverse solutions. PbRL turns reward engineering into the collection of human preference labels, while IRL turns it into the collection of demonstration data. Difficulties in transformation can be addressed from a new perspective.
Secondly, recent literature \cite{rajagopal_2023,knox_misdesign_2023,booth_2023} point out that in the practice of reinforcement learning reward design, the problems with reward design are usually caused by designers focusing on the state and neglecting its impact on the trajectory. TdRL directly focuses on the quality of the trajectory during design tests, which is more intuitive.
Additionally, reward design often requires considering trade-offs between multiple optimization objectives, which is challenging and usually requires a significant amount of time and resources for tuning. TdRL only needs to set the required satisfaction thresholds for each optimization objective, without designing weights between them; the trade-off between multiple optimization objectives is automatically completed during the reward learning in the TdRL algorithm.
In summary, we believe that TdRL, compared to reward engineering, reduces the difficulty of reward design and offers better support for multi-objective optimization. Moreover, this shift also provides a new perspective for addressing future challenges in reward design.

\begin{table}[htbp]
    \centering
    \caption{Comparison of TdRL and other reward design/learning methods.}
    \label{tab:comparision}
    \begin{tabular}{>{\centering\arraybackslash}m{1.5cm}|c|c|c|>{\centering\arraybackslash}m{2cm}}
        \hline
        & \textbf{TdRL} & \textbf{PbRL} & \textbf{IRL} & \textbf{Reward Engineering} \\
        \hline
        No preference label required & $\checkmark$ & $\times$ & $\checkmark$ & $\checkmark$ \\
        \hline
        No demonstration required & $\checkmark$ & $\checkmark$ & $\times$ & $\checkmark$ \\
        \hline
        Trajectory-level evaluation & $\checkmark$ & $\checkmark$ & / & $\times$ \\
        \hline
    \end{tabular}
    
\end{table}

\textbf{Compare with other reward design/learning methods}. \cref{tab:comparision} compares TdRL (Test-driven Reinforcement Learning), PbRL (Preference-based Reinforcement Learning), IRL (Inverse Reinforcement Learning), and Reward Engineering. TdRL is independent of human preference labels or expert demonstrations, and its core learning process relies on trajectory-level evaluation. In contrast, PbRL requires human-provided preference labels and also employs trajectory evaluation. IRL  relies heavily on expert demonstrations to infer reward functions. Finally, Reward Engineering, which involves the manual design of reward functions, requires neither preference labels nor demonstrations; however, it focuses on state evaluation rather than trajectory evaluation, often resulting in suboptimal reward functions during the design process.

\section{Hyperparameter}
\label{app:hyperparameter}

\begin{table}[tb]
\centering
\caption{Hyperparameter Settings}
\label{tab:hyperparameters}
\begin{tabular}{l|cccc}
    \hline
    ~ & \textbf{SAC} & \textbf{TdRL} & \textbf{PPO} & \makecell{\textbf{PPO}\\\textbf{TdRL}} \\
    \hline
    discount factor($\gamma$) & \multicolumn{4}{c}{0.99} \\
    critic hidden dim & \multicolumn{4}{c}{1024} \\
    actor hidden dim & \multicolumn{4}{c}{1024} \\
    critic depth & \multicolumn{4}{c}{2} \\
    actor depth & \multicolumn{4}{c}{2} \\
    optimizer & \multicolumn{4}{c}{Adam} \\
    adam $\beta_1$ & \multicolumn{4}{c}{0.9} \\
    adam $\beta_2$ & \multicolumn{4}{c}{0.999} \\
    actor lr & \multicolumn{2}{c}{0.0005} & \multicolumn{2}{c}{0.0003} \\
    critic lr & \multicolumn{2}{c}{0.0005} & \multicolumn{2}{c}{0.0003} \\
    alpha lr & \multicolumn{2}{c}{0.0001} & \multicolumn{2}{c}{/} \\
    batch size & \multicolumn{2}{c}{1024} & \multicolumn{2}{c}{64} \\
    critic $\tau$ & \multicolumn{2}{c}{0.005} & \multicolumn{2}{c}{/} \\
    unsupervised steps & / & 9000 & / & 9000 \\
    trajectory max num & / & 100 & / & 100 \\
    segment size & / & 50 & / & 50 \\
    rew lr & / & 0.0003 & / & 0.0003 \\
    ret lr & / & 0.0003 & / & 0.0003 \\
    rew ensemble & / & 3 & / & 3 \\
    ret ensemble & / & 3 & / & 3 \\
    rew batch size & / & 128 & / & 128 \\
    ret batch size & / & 128 & / & 128 \\
    rew update num & / & 50 & / & 50 \\
    ret update num & / & 50 & / & 50 \\
    ret change penalty & / & 0.1 & / & 0.1 \\
    rew update interval & / & 5000 & / & 5000 \\
    ret update interval & / & 5000 & / & 5000 \\
    es multiple & / & 10 & / & 10 \\
    \hline
\end{tabular}
\end{table}

\cref{tab:hyperparameters} uses some standard reinforcement learning abbreviations for conciseness: Learning rates are abbreviated as "lr" (e.g., actor lr), "rew" and "ret" shorten reward and return-related terms (e.g., rew lr is reward learning rate). These conventions align with common RL literature to ensure clarity while saving space.

\section{More Experimental Results}
\label{app:more_exp}

\begin{table*}[htbp]
\centering
\caption{Performance Comparison of Different Methods}
\label{tab:performance}
\begin{tabular}{c|c|c|c|c|c}
\hline
\textbf{env}           & \textbf{metric}   & \textbf{target} & \textbf{\makecell{SAC \\ (Oracle Reward)}} & \textbf{TdRL-ES} & \textbf{TdRL-GN} \\ \hline
\multirow{2}{*}{Walker-Stand}  & upright      &    $[0.9,1]$    & $0.94\pm 0.03$                    & $\mathbf{0.95\pm0.02}$        & $0.94\pm0.03$        \\ 
                      &stand height &    $[1.2,+\infty]$    & $1.23\pm0.02$                    & $\mathbf{1.26\pm 0.02}$        & $1.25\pm 0.03$        \\ \hline
\multirow{3}{*}{Walker-Run}    & upright      &    $[0.9,1]$    & $\mathbf{0.97\pm0.02}$                    & $0.94\pm0.02$        & $0.93\pm0.02$        \\ 
                      &stand height &    $[1.2,+\infty]$    & \textcolor{blue}{$1.12\pm 0.01$}                    & $1.30\pm0.01$        & $\mathbf{1.26\pm0.02}$        \\
                      & speed        &    $[8,+\infty]$    & \textcolor{blue}{$\mathbf{6.80\pm0.08}$}                    & $\textcolor{blue}{6.78\pm0.07}$        & $\textcolor{blue}{6.45\pm0.14}$        \\ \hline
Cheetah-Run   & speed        &    $[10,+\infty]$    & $\textcolor{blue}{9.44\pm 0.19}$                    & $\textcolor{blue}{9.56\pm0.37}$        & \textcolor{blue}{$\mathbf{9.88\pm0.08}$}        \\ \hline
\multirow{2}{*}{Quadruped-Run} & upright      &    $[0.9, 1]$    & $\mathbf{0.94\pm0.03}$                    & $0.93\pm0.04$        & $0.91\pm0.13$        \\ 
                      & speed        &    $[5,+\infty]$    & \textcolor{blue}{$3.95\pm0.09$ }                   &  \textcolor{blue}{$4.00\pm0.13$}        & \textcolor{blue}{$\mathbf{4.56\pm0.49}$}        \\ \hline
\end{tabular}
\end{table*}

\cref{tab:performance} shows the performance of policies trained by SAC with oracle rewards and TdRL algorithms in multiple DM-Control tasks. The table details the metrics of the policy with the highest episode reward during training, where each policy is tested ten times, and the average performance and standard deviation of these ten tests are recorded. Black values in the table indicate that the performance meets the target, blue values indicate that the performance does not meet the target, and bold values represent the highest value among all algorithms for that metric.

\textbf{Preformance Comparison}. The experimental results in \cref{tab:performance} show that in multiple tasks, TdRL can achieve or exceed the performance of SAC with oracle rewards, and it can demonstrate better performance when SAC with oracle rewards fails to guarantee certain metrics meet the target. Additionally, for some metrics in certain tasks (e.g., speed in Walker-Run), neither the policies trained by SAC with oracle rewards nor TdRL can achieve the task objectives, but TdRL generally exhibits better performance on these metrics. This could be because TdRL optimizes the unmet metrics as much as possible after ensuring other metrics meet the target, whereas SAC with oracle rewards may continue improving other metrics to further improve cumulative rewards, which may lead to reward hacking.

\begin{figure}[htbp]
\centering
\includegraphics[width=\columnwidth]{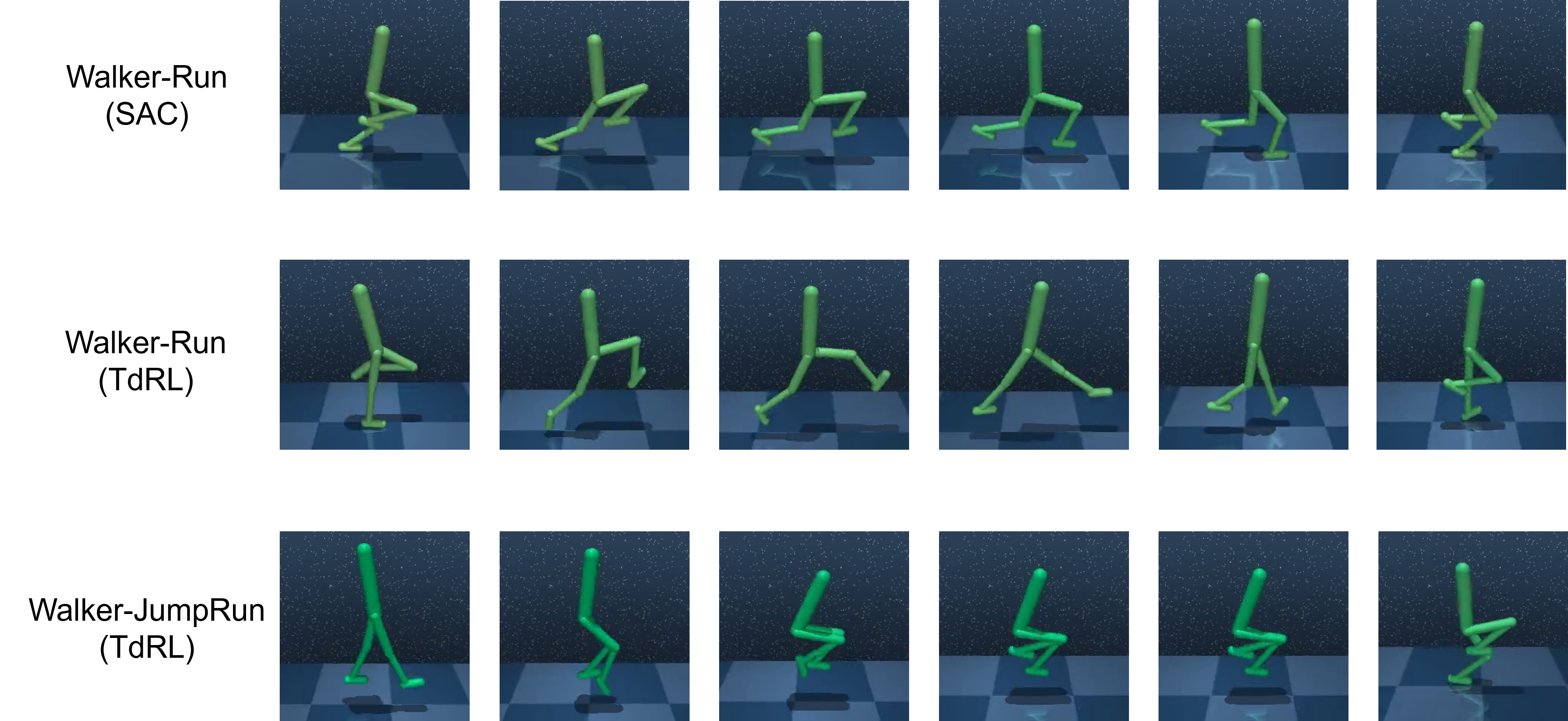}
\caption{The frames of policies trained by SAC with oracle reward and TdRL in the Walker-Run task, as well as the performance of the policy trained by TdRL on a new task, \textit{Walker-JumpRun}. \textit{Walker-JumpRun} adds a test of the maximum torsor height in trajectory based on the Walker-Run task, as detailed in Appendix \ref{app:tester}.}
\label{fig:run_frame}
\end{figure}

\textbf{Novel Task}. \cref{fig:run_frame} presents frames from the \textit{Walker-Run} task, comparing policies trained by SAC with oracle reward and TdRL. The TdRL-trained policy consistently raises the legs higher during each running stride than the policy trained by oracle reward, thereby maintaining the desired torso height throughout the trajectory. Furthermore, we evaluated TdRL on a new task, \textit{Walker-JumpRun}, which extends the \textit{Walker-Run} task by introducing an additional test for maximum torso height. As shown in \cref{fig:run_frame}, to satisfy the maximum height requirement, the torso must perform repeated jumps during locomotion. Simultaneously, to maximize forward speed, the torso retracts its legs mid-jump, enabling greater forward displacement per jump. These results demonstrate that TdRL can more readily accomplish complex tasks compared to manual reward function design by incorporating additional test functions.

\begin{table*}[htbp]
\centering
\caption{Performance metrics for different speed thresholds and torso angle threshold}
\label{tab:speed_angle_sensitive}
\begin{tabular}{c|c|c|c|c}
\hline
\textbf{speed\textbackslash torso angle} & \textbf{0.9} & \textbf{0.95} & \textbf{0.98} & \textbf{0.99} \\
\hline
2 & 0.94/1.31/2.05 & 0.97/1.30/2.02 & 0.98/1.30/3.87 & 0.99/1.26/1.98 \\
4 & 0.95/1.26/4.01 & 0.97/1.24/3.98 & 0.98/1.27/3.96 & 0.99/1.24/3.93 \\
6 & 0.95/1.30/5.86 & 0.96/1.32/5.81 & 0.98/1.29/5.64 & 0.99/1.26/5.18 \\
8 & 0.93/1.28/7.34 & 0.94/1.32/7.27 & 0.98/1.29/6.31 & 0.99/1.25/5.55 \\
\hline
\end{tabular}
\end{table*}

\textbf{Sensitivity Analysis}. \cref{tab:speed_angle_sensitive} shows the performance metrics tested on the \textit{walker-run} environment by systematically varying the target speed and the torso angle threshold. The three metrics recorded in each cell of the results table are: torso angle / average height / average speed. It shows that a clear performance trade-off exists. Specifically, higher target speeds (e.g., 8 m/s) become increasingly difficult to achieve as the torso angle constraint is tightened (from 0.9 to 0.99 rad), evidenced by the significant drop in the achieved speed metric (from 7.34 to 5.55). Conversely, at lower target speeds (e.g., 2 m/s), the agent can maintain a very close torso angle and consistent height across all thresholds. This indicates that the policy's ability to precisely control its torso angle while maintaining stability is compromised when pushed to its maximum velocity limits. Moreover, the overall experimental results demonstrate that TdRL tends to prioritize achieving the more attainable threshold metrics when thresholds vary, while making efforts to satisfy other challenging performance metrics.

\section{Proof of \cref{eq:maxent}}
\label{app:proof_eq:maxent}

When policy $\pi_1$ is optimized via maximum entropy reinforcement learning with respect to the trajectory return function $R(\tau)=\sum_{(s,a)\in\tau} r(s,a)$, the resulting policy $\pi_2$ follows:
$$
    \pi_2(\tau) = \frac{1}{Z} \pi_1(\tau) \exp\left( \frac{1}{\alpha} R(\tau) \right).
$$

\begin{proof}
    The optimization objective of Maximum Entropy Reinforcement Learning (MERL) is:
$$
        \max_{\pi} \mathbb{E}_{\tau \sim \pi} \left[ \sum_{t=0}^T r(s_t, a_t) + \alpha H(\pi(\cdot|s_t)) \right],
$$
where $\alpha > 0$ is a temperature parameter that balances reward maximization and entropy maximization, and the entropy of the policy at state $s_t$ is defined as
$$
H(\pi(\cdot|s_t)) = - \mathbb{E}_{a_t \sim \pi(\cdot|s_t)} \left[ \log \pi(a_t|s_t) \right].
$$

Since the trajectory distribution depends on the policy's action distribution, and the environment dynamics are fixed, the total entropy contribution over the trajectory can be expressed as

\begin{align}
    \sum_{t=0}^T H(\pi(\cdot|s_t)) &= - \mathbb{E}_{\tau \sim \pi} \left[ \sum_{t=0}^T \log \pi(a_t|s_t) \right] \notag \\ 
    &= - \mathbb{E}_{\tau \sim \pi} \left[ \log \pi(\tau) \right] + \text{const}, \notag
\end{align}

where the constant comes from the environment dynamics and initial state distribution, which do not depend on $\pi$.

Therefore, the maximum entropy objective can be equivalently written as
$$
\max_{\pi} \mathbb{E}_{\tau \sim \pi} \left[ R(\tau) - \alpha \log \pi(\tau) \right].
$$

The objective above can be viewed as maximizing the expected return regularized by the negative entropy of the trajectory distribution. Equivalently, this corresponds to minimizing the Kullback-Leibler (KL) divergence between the policy-induced trajectory distribution $\pi(\tau)$ and a reward-weighted distribution.
Formally, define a base trajectory distribution $\pi_1(\tau)$ induced by some prior policy or uniform policy:
$$
\pi_1(\tau) = p(s_0) \prod_{t=0}^T \pi_1(a_t|s_t) p(s_{t+1}|s_t, a_t).
$$

The maximum entropy objective implies the optimal trajectory distribution $\pi_2(\tau)$ satisfies:
$$
\pi_2 = \arg\min_{\pi} D_{\mathrm{KL}}\left( \pi(\tau) \, \Big\| \, \frac{1}{Z} \pi_1(\tau) \exp\left( \frac{1}{\alpha} R(\tau) \right) \right),
$$
where $Z$ is the partition function ensuring normalization:
$$
Z = \int \pi_1(\tau) \exp\left( \frac{1}{\alpha} R(\tau) \right) d\tau.
$$

Solving this minimization yields the optimal distribution:
$$
\pi_2(\tau) = \frac{1}{Z} \pi_1(\tau) \exp\left( \frac{1}{\alpha} R(\tau) \right).
$$
\end{proof}

\section{Proof of \cref{theorem:R_pi}}
\label{app:proof_R_pi}

\begin{lemma}
    \label{lemma:ratio_monotonically}
    Let the trajectory return function $R(\tau)$ be monotonically non-increasing with respect to the distance between a trajectory $\tau$ and the optimal trajectory set $\tilde{\mathcal{T}}$:
$$
d(\tau_1, \tilde{\mathcal{T}}) \leq d(\tau_2, \tilde{\mathcal{T}}) \implies R(\tau_1) \geq R(\tau_2).
$$
Let $P_{\pi_1}(\tau)$ denote the trajectory distribution of the baseline policy, and the maximum entropy optimization yields the trajectory distribution of the improved policy as
$$
P_{\pi_2}(\tau) = \frac{1}{Z} P_{\pi_1}(\tau) \exp\left( \frac{1}{\alpha} R(\tau) \right),
$$
where $Z$ is the partition function. By mapping trajectories to their distances $\rho = d(\tau, \tilde{\mathcal{T}})$, we define the corresponding probability density functions $P_{\pi_i}(\rho)$. Then, the likelihood ratio function
$$
r(\rho) := {P_{\pi_2}(\rho)}/{P_{\pi_1}(\rho)}
$$
is a monotonically non-increasing function of the distance $\rho$
\end{lemma}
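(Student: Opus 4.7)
\section*{Proof proposal for Lemma 1}

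The plan is to reduce the likelihood ratio to a single exponential factor that depends only on the distance $\rho$, after which monotonicity follows immediately from the assumed monotonicity of $R$. The crucial preliminary observation is that the hypothesis
\[
d(\tau_1,\tilde{\mathcal{T}}) \le d(\tau_2,\tilde{\mathcal{T}}) \;\Longrightarrow\; R(\tau_1)\ge R(\tau_2)
\]
is stated as a two-sided implication on \emph{all} pairs. Applying it symmetrically to any two trajectories with $d(\tau_1,\tilde{\mathcal{T}}) = d(\tau_2,\tilde{\mathcal{T}}) = \rho$ yields $R(\tau_1)=R(\tau_2)$. Hence $R$ is constant on every level set of the distance map, so there exists a well-defined scalar function $\tilde R:[0,\infty)\to\mathbb R$ with $R(\tau) = \tilde R\bigl(d(\tau,\tilde{\mathcal{T}})\bigr)$, and $\tilde R$ inherits the property of being monotonically non-increasing in $\rho$.

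Next I would compute the two marginal densities by pushing forward through the distance map. Writing the marginal as a disintegration over the level set $\{\tau:d(\tau,\tilde{\mathcal{T}})=\rho\}$, we have
\[
P_{\pi_i}(\rho) \;=\; \int P_{\pi_i}(\tau)\,\delta\!\bigl(d(\tau,\tilde{\mathcal{T}})-\rho\bigr)\,d\tau.
\]
Substituting the maximum-entropy identity $P_{\pi_2}(\tau) = Z^{-1}P_{\pi_1}(\tau)\exp(R(\tau)/\alpha)$ into the integral for $P_{\pi_2}(\rho)$ and using the fact that $R(\tau)=\tilde R(\rho)$ is constant on the support of the delta function, the exponential factor pulls outside the integral, giving
\[
P_{\pi_2}(\rho) \;=\; \frac{1}{Z}\exp\!\Bigl(\tfrac{1}{\alpha}\tilde R(\rho)\Bigr) P_{\pi_1}(\rho).
\]

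From this identity the ratio simplifies to $r(\rho) = Z^{-1}\exp(\tilde R(\rho)/\alpha)$. Since $\tilde R$ is monotonically non-increasing in $\rho$ and the exponential is monotonically increasing, the composition $r(\rho)$ is non-increasing, which is the claim. The only subtle step I expect to need care with is the first one: the proof really hinges on realizing that the one-sided-looking monotonicity hypothesis forces $R$ to factor through the distance, because without this factorization the inner integral would couple $\exp(R(\tau)/\alpha)$ with the geometry of the level set and the exponential could not be pulled out. Everything afterwards is formal manipulation of the exponential-tilting formula from \cref{eq:maxent}.
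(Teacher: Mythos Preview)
Your proof is correct and is actually cleaner than the paper's. The key move you make---observing that the non-strict monotonicity hypothesis forces $R$ to be constant on each distance level set, so that $R(\tau)=\tilde R\bigl(d(\tau,\tilde{\mathcal T})\bigr)$ for a non-increasing scalar function $\tilde R$---lets you pull the exponential out of the level-set integral and obtain the exact identity $r(\rho)=Z^{-1}\exp\bigl(\tilde R(\rho)/\alpha\bigr)$, from which monotonicity is immediate.

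The paper does not isolate this factorization. Instead it writes $r(\rho)=Z^{-1}\,\mathbb E_{\tau\sim P_{\pi_1}(\cdot\mid\rho)}\bigl[\exp(R(\tau)/\alpha)\bigr]$, bounds this conditional expectation from below via Jensen by $Z^{-1}\exp\bigl(\bar R(\rho)/\alpha\bigr)$ and from above by $Z^{-1}\exp\bigl(M(\rho)/\alpha\bigr)$ with $M(\rho)=\sup_{\Phi^{-1}(\rho)}R$, and then argues that since both envelopes are non-increasing, $r(\rho)$ ``is sandwiched between monotonically non-increasing functions'' and is therefore itself non-increasing. That last inference is not valid as a general principle---a function lying between two non-increasing envelopes need not be monotone---so the paper's argument only becomes rigorous once one notices, as you did, that under the stated hypothesis the interval $[m(\rho),M(\rho)]$ is in fact degenerate, collapsing the sandwich to an equality. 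Your route is thus both more elementary and more transparent about where the monotonicity actually comes from.
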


\begin{proof}
    Let us define the mapping:
$$\Phi: \tau \mapsto \rho = d(\tau, \tilde{\mathcal{T}}),$$
which projects the trajectory space onto the non-negative real axis. Assume the trajectory space is equipped with a measure $\mu$, and the baseline trajectory distribution $P_{\pi_1}(\tau)$ represents a probability density function with respect to $\mu$.
From probability theory, the marginal probability density is defined as:
$$P_{\pi_1}(\rho) = \int_{\tau \in \Phi^{-1}(\rho)} P_{\pi_1}(\tau) \, d\mu(\tau).$$
We define the conditional probability density (the distribution of trajectories $\tau$ at fixed distance $\rho$) as:
$$P_{\pi_1}(\tau | \rho) = \frac{P_{\pi_1}(\tau)}{P_{\pi_1}(\rho)}, \quad \forall \tau \in \Phi^{-1}(\rho).$$
By the definition of the maximum entropy policy distribution:
$$P_{\pi_2}(\tau) = \frac{1}{Z} P_{\pi_1}(\tau) \exp\left(\frac{1}{\alpha} R(\tau)\right),$$
the corresponding marginal probability density becomes:
\begin{align}
P_{\pi_2}(\rho) &= \int_{\Phi^{-1}(\rho)} P_{\pi_2}(\tau) d\mu(\tau) \notag \\
&= \frac{1}{Z} \int_{\Phi^{-1}(\rho)} P_{\pi_1}(\tau) \exp\left(\frac{1}{\alpha} R(\tau)\right) d\mu(\tau). \notag
\end{align}
Expressed in conditional probability form:
\begin{align}
    &P_{\pi_2}(\rho) \notag \\
    &= \frac{1}{Z} P_{\pi_1}(\rho) \int_{\Phi^{-1}(\rho)} P_{\pi_1}(\tau | \rho) \exp\left(\frac{1}{\alpha} R(\tau)\right) d\mu(\tau). \notag
\end{align}
We define:
$$r(\rho) := \frac{P_{\pi_2}(\rho)}{P_{\pi_1}(\rho)} = \frac{1}{Z} \mathbb{E}_{\tau \sim P_{\pi_1}(\cdot|\rho)} \left[ \exp\left(\frac{1}{\alpha} R(\tau)\right) \right].$$
Given the monotonic relationship:
$$d(\tau_1, \tilde{\mathcal{T}}) \leq d(\tau_2, \tilde{\mathcal{T}}) \implies R(\tau_1) \geq R(\tau_2),$$
the reward function $R(\tau)$ is a monotonically non-increasing function of distance $\rho$. Consequently, for any $\rho\geq 0$, the range of $R(\tau)$ over the trajectory set $\Phi^{-1}(\rho)$ forms a bounded interval:
$$R(\Phi^{-1}(\rho)) := \{ R(\tau) \mid \tau \in \Phi^{-1}(\rho) \} \subseteq [m(\rho), M(\rho)],$$
where:
$$m(\rho) := \inf_{\tau \in \Phi^{-1}(\rho)} R(\tau), \quad M(\rho) := \sup_{\tau \in \Phi^{-1}(\rho)} R(\tau).$$
Moreover, $M(\rho)$ is a monotonically non-increasing function of $\rho$:
$$\rho_1 \leq \rho_2 \implies M(\rho_1) \geq M(\rho_2).$$
Thus, all trajectory rewards within $\Phi^{-1}(\rho)$ lie within a bounded interval, and as $\rho$ increases, the upper bound of $R(\tau)$ decreases.
We now employ Jensen's inequality to analyze the monotonicity of $r(\rho)$:

Since the exponential function is convex:

\begin{align}
    r(\rho) &= \frac{1}{Z} \mathbb{E}_{\tau|\rho} \left[ \exp\left(\frac{1}{\alpha} R(\tau)\right) \right] \notag \\
    &\geq \frac{1}{Z} \exp\left( \frac{1}{\alpha} \mathbb{E}_{\tau|\rho}[R(\tau)] \right). \notag
\end{align}

Defining the conditional expected reward:

$$\bar{R}(\rho) := \mathbb{E}_{\tau \sim P_{\pi_1}(\cdot|\rho)} [R(\tau)],$$
which inherits the monotonically non-increasing property from $R(\tau)$.
This yields the lower bound:
$$r(\rho) \geq \frac{1}{Z} \exp\left( \frac{1}{\alpha} \bar{R}(\rho) \right),$$
where the right-hand side is monotonically non-increasing.
Furthermore, the upper bound is given by:

\begin{align}
    r(\rho) &\leq \frac{1}{Z} \exp\left( \frac{1}{\alpha} \max_{\tau \in \Phi^{-1}(\rho)} R(\tau) \right) \notag \\
    &= \frac{1}{Z} \exp\left( \frac{1}{\alpha} M(\rho) \right), \notag
\end{align}
which is also monotonically non-increasing.
Therefore, $r(\rho)$ is sandwiched between monotonically non-increasing functions, and under typical conditions where the conditional distribution becomes increasingly concentrated, $r(\rho)$ itself converges to being strictly monotonically non-increasing.
\end{proof}

\cref{lemma:ratio_monotonically} shows that if a return function $R(\tau)$ is monotonically non-increasing with the distance of a trajectory $\tau$ from the optimal trajectory set, then the policy optimized under maximum entropy reinforcement learning will assign higher probabilities to trajectories nearer to the optimal trajectory set.

\begin{lemma}
    \label{lemma:distance_geq}
    Consider a trajectory space $(\mathcal{T}, d)$ equipped with a distance metric that maps a trajectory $\tau$ to its distance from the optimal trajectory set $\tilde{\mathcal{T}}$:
$$
\rho := d(\tau, \tilde{\mathcal{T}}) \in [0, +\infty).
$$
The trajectory distributions induced by policies $\pi_1$ and $\pi_2$ are represented by probability density functions $P_{\pi_1}(\rho)$ and $P_{\pi_2}(\rho)$, respectively, where $\rho \in \mathbb{R}_{\geq 0}$. Suppose there exists a non-increasing function $r(\rho)$ such that for all $\rho \geq 0$,
$$
\frac{P_{\pi_2}(\rho)}{P_{\pi_1}(\rho)} = r(\rho),
$$
meaning that $\pi_2$ assigns higher relative probability density to trajectories with smaller $\rho$ (i.e., closer to $\tilde{\mathcal{T}}$). The optimal policy set $\tilde{\Pi}$ consists of policies that generate trajectory distributions concentrated at $\rho = 0$ (i.e., a Dirac delta distribution $\delta_0$).
Then, the following inequality holds:
$$
d(\pi_1, \tilde{\Pi}) \geq d(\pi_2, \tilde{\Pi}).
$$
\end{lemma}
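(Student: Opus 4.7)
The plan is to reduce the comparison of policy-to-set distances to a comparison of $p$-th moments of $\rho$ under $P_{\pi_1}$ and $P_{\pi_2}$, then exploit the monotone likelihood ratio to order those moments.

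\textbf{Step 1: Reduce $d(\pi_i,\tilde{\Pi})$ to a $\rho$-moment.} For any $\pi^{*} \in \tilde{\Pi}$, the distribution $P_{\pi^{*}}$ is supported on $\tilde{\mathcal{T}}$, so a trajectory drawn from $\pi^{*}$ has $\rho=0$. Using $\rho(\tau)=\min_{\tau'\in\tilde{\mathcal{T}}}d(\tau,\tau')\leq d(\tau,\tau')$ for every $\tau'\in\tilde{\mathcal{T}}$, any coupling $\gamma \in \Gamma(P_{\pi_i},P_{\pi^{*}})$ satisfies $\int d(\tau_1,\tau_2)^{p}\,d\gamma \geq \mathbb{E}_{P_{\pi_i}}[\rho^{p}]$, so $W_{p}(P_{\pi_i},P_{\pi^{*}})^{p} \geq \mathbb{E}_{P_{\pi_i}}[\rho^{p}]$. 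I would then match this bound by constructing a $\pi^{*}\in\tilde{\Pi}$ whose trajectory distribution is the pushforward of $P_{\pi_i}$ under a measurable nearest-point projection onto $\tilde{\mathcal{T}}$, yielding $d(\pi_i,\tilde{\Pi})^{p} = \mathbb{E}_{P_{\pi_i}}[\rho^{p}]$ for $i\in\{1,2\}$.

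\textbf{Step 2: Compare moments via the monotone ratio.} Writing $P_{\pi_2}(\rho) = r(\rho)\, P_{\pi_1}(\rho)$, it suffices to establish
\begin{equation*}
\int_{0}^{\infty} \rho^{p}\,\bigl(r(\rho)-1\bigr)\, P_{\pi_1}(\rho)\, d\rho \;\leq\; 0.
\end{equation*}
Because both densities integrate to one, $\int_{0}^{\infty}(r(\rho)-1)P_{\pi_1}(\rho)\,d\rho = 0$, and since $r$ is non-increasing, there exists a threshold $\rho_0 \geq 0$ with $r(\rho)\geq 1$ for $\rho\leq\rho_0$ and $r(\rho)\leq 1$ for $\rho\geq\rho_0$. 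Splitting the integral at $\rho_0$ and bounding $\rho^{p}\leq \rho_{0}^{p}$ on the region where $r-1\geq 0$ and $\rho^{p}\geq \rho_{0}^{p}$ on the region where $r-1\leq 0$ gives
\begin{equation*}
\int_{0}^{\infty}\rho^{p}(r-1)P_{\pi_1}\,d\rho \;\leq\; \rho_{0}^{p}\int_{0}^{\infty}(r-1)P_{\pi_1}\,d\rho \;=\; 0,
\end{equation*}
which, combined with Step 1, delivers $d(\pi_1,\tilde{\Pi}) \geq d(\pi_2,\tilde{\Pi})$.

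\textbf{Main obstacle.} The delicate point is Step 1: making the infimum in $d(\pi_i,\tilde{\Pi})$ actually attained requires producing an admissible policy $\pi^{*}\in\tilde{\Pi}$ whose trajectory distribution realises the projection pushforward, which implicitly invokes a measurable selection theorem and the assumption that any trajectory distribution supported on $\tilde{\mathcal{T}}$ is induced by some policy. If attainment is not available one can use an $\varepsilon$-approximating sequence of couplings and pass to the limit, which still suffices to chain the inequalities. Step 2 is a standard monotone-rearrangement argument once the cross-over point $\rho_0$ is identified, and carries over verbatim to the case where $r$ crosses $1$ on a set rather than at a single point.
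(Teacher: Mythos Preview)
Your proof is correct and shares the paper's two-step skeleton: first reduce $d(\pi_i,\tilde{\Pi})$ to the $p$-th moment of $\rho$, then order those moments via the non-increasing likelihood ratio. The executions differ in both steps. For Step~1, the paper simply identifies all of $\tilde{\Pi}$ with the single Dirac mass $\delta_0$ in $\rho$-space and invokes the closed form $W_p(P_\pi,\delta_0)^p=\int_0^\infty\rho^p\,dP_\pi(\rho)$ for the Wasserstein distance to a point mass, sidestepping the trajectory-space coupling issue entirely; your version stays in trajectory space and constructs a nearest-point projecting policy, which is more faithful to the main-text definition of $d(\pi,\tilde{\Pi})$ but, as you correctly flag, requires a measurable-selection/realisability assumption (or the $\varepsilon$-approximation workaround). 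For Step~2, the paper appeals to a ``rearrangement inequality'' by treating $P_{\pi_2}$ as a non-increasing rearrangement of $P_{\pi_1}$, whereas you give the direct crossing-point argument using $\int_0^\infty(r-1)P_{\pi_1}\,d\rho=0$ and the sign pattern of $r-1$; your argument is cleaner and self-contained, and is in fact the standard proof that monotone likelihood ratio order implies ordered expectations of increasing test functions.
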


\begin{proof}
    Since all policies in $\tilde{\Pi}$ induce the Dirac delta trajectory distribution $\delta_0$, the Wasserstein-$p$ distance between any policy $\pi$ and $\tilde{\Pi}$ is given by:
$$d(\pi, \tilde{\Pi}) = W_p(P_{\pi}, \delta_0).$$
When the target distribution is a Dirac measure, the Wasserstein distance simplifies to:
$$W_p(P_{\pi}, \delta_0)^p = \int_{0}^{+\infty} \rho^p \, dP_{\pi}(\rho) = \mathbb{E}_{\tau \sim \pi} \left[ \rho(\tau)^p \right].$$
Thus, we need to prove:
$$\mathbb{E}_{\tau \sim \pi_1} \left[ \rho(\tau)^p \right] \geq \mathbb{E}_{\tau \sim \pi_2} \left[ \rho(\tau)^p \right].$$
From our assumptions, $P_{\pi_2}(\rho) = r(\rho) P_{\pi_1}(\rho)$, where $r(\rho)$ is a non-increasing function. This implies that for any $\rho_1 \leq \rho_2$:
$$\frac{P_{\pi_2}(\rho_1)}{P_{\pi_1}(\rho_1)} = r(\rho_1) \geq r(\rho_2) = \frac{P_{\pi_2}(\rho_2)}{P_{\pi_1}(\rho_2)}.$$
In other words, $\pi_2$ redistributes probability mass from smaller to larger $\rho$ values less aggressively than $\pi_1$.
Rearrangement Inequality states that if $f(\rho)$ is non-decreasing and $g(\rho)$ is non-increasing, then:
$$\int_{0}^{+\infty} f(\rho) g(\rho) d\rho \leq \int_{0}^{+\infty} f(\rho) \tilde{g}(\rho) d\rho,$$
where $\tilde{g}(\rho)$ is the non-increasing rearrangement of $g(\rho)$.
Application:

Let $f(\rho) = \rho^p$ (clearly non-decreasing).
Let $g(\rho) = P_{\pi_1}(\rho)$, and since $P_{\pi_2}(\rho) = r(\rho) P_{\pi_1}(\rho)$ with $r(\rho)$ non-increasing, $P_{\pi_2}(\rho)$ represents a non-increasing rearrangement of $P_{\pi_1}(\rho)$.

By the rearrangement inequality, we have:
\begin{align}
    \int_{0}^{+\infty} \rho^p P_{\pi_2}(\rho) d\rho 
    &= \int_{0}^{+\infty} \rho^p r(\rho) P_{\pi_1}(\rho) d\rho \notag \\
    &\leq \int_{0}^{+\infty} \rho^p P_{\pi_1}(\rho) d\rho. \notag
\end{align}
This establishes:
$$\mathbb{E}_{\tau \sim \pi_2} \left[ \rho(\tau)^p \right] \leq \mathbb{E}_{\tau \sim \pi_1} \left[ \rho(\tau)^p \right].$$
From the expression of Wasserstein-$p$ distance, we directly obtain:
$$W_p(P_{\pi_1}, \delta_0)^p \geq W_p(P_{\pi_2}, \delta_0)^p \implies d(\pi_1, \tilde{\Pi}) \geq d(\pi_2, \tilde{\Pi}).$$
\end{proof}


Based on \cref{lemma:ratio_monotonically} and \cref{lemma:distance_geq}, we can readily derive that:

If exist a trajectory return function $R(\tau)$ that is monotonically non-increasing with respect to the distance between a trajectory $\tau$ and the desired trajectory set $\tilde{\mathcal{T}}$, such that:
$$
d(\tau_1, \tilde{\mathcal{T}}) \leq d(\tau_2, \tilde{\mathcal{T}}) \implies R(\tau_1) \geq R(\tau_2).
$$
Suppose policy $\pi_2$ is obtained by optimizing policy $\pi_1$ using a maximum entropy algorithm with respect to $R$, 
Then, policy $\pi_2$ is closer to the optimal policy set $\tilde{\Pi}$ than $\pi_1$:
$$
d(\pi_1, \tilde{\Pi}) \geq d(\pi_2, \tilde{\Pi}).
$$

\section{Indicative Test Combination}
\label{app:indicative}
In this section, we explore the combination of multiple indicative tests into a return function. 

\cref{fig:pf_combine} shows the combination of pass-fail tests, where the optimal trajectory set $\tilde {\mathcal T}$ is the intersection of $\tilde {\mathcal T}_i, \forall i\in\{1,2,...m\}$. The policy optimization objective seeks to find a policy whose interaction with the environment generates trajectories that all fall within the optimal trajectory set $\tilde{\mathcal{T}}$. Therefore, we can construct a trajectory reward function such that trajectories closer to the optimal trajectory set $\tilde{\mathcal{T}}$ receive higher rewards.

\begin{figure}[htb]
\centering
\includegraphics[width=\columnwidth]{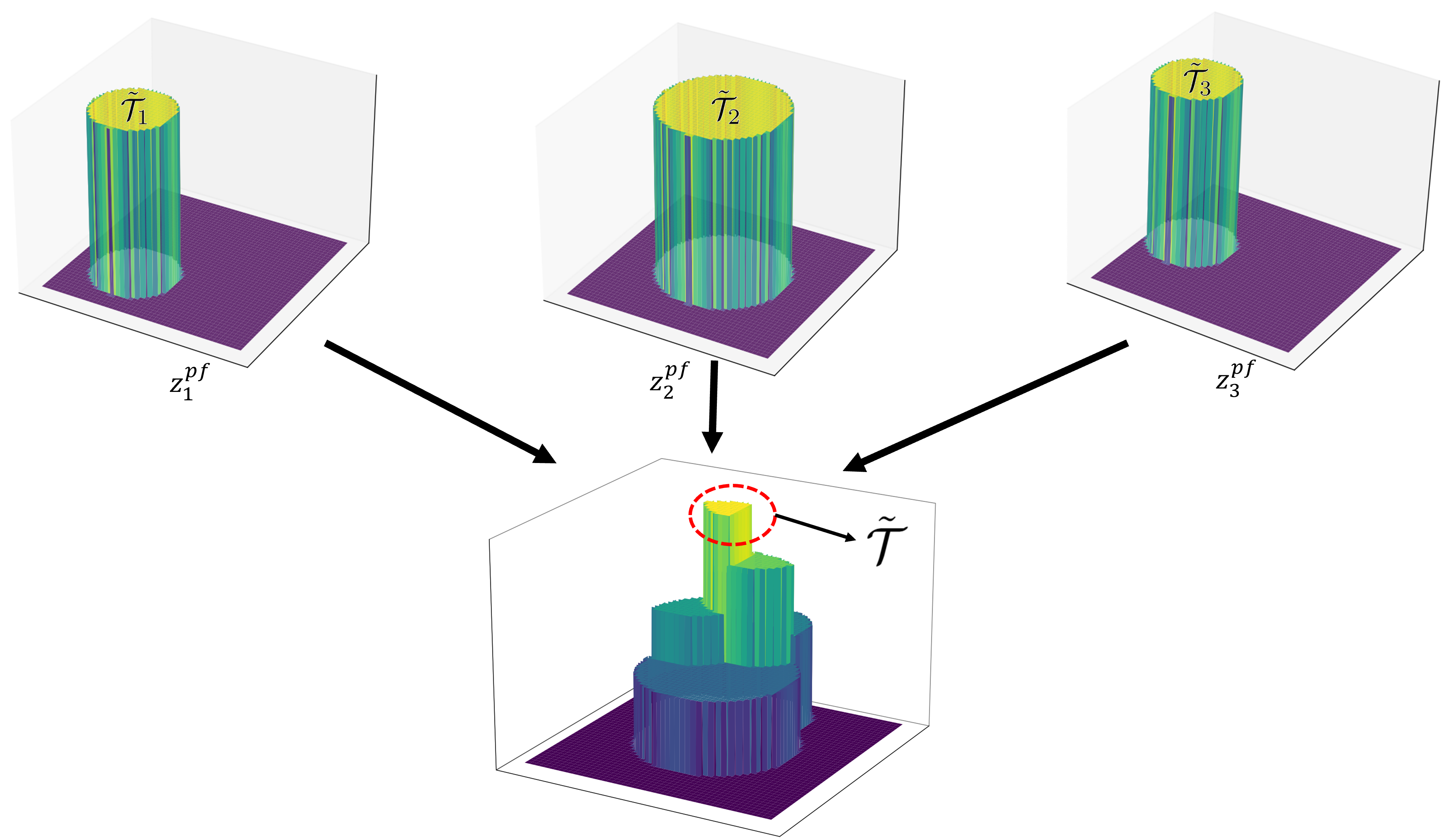}
\caption{A combination of pass-fail tests. The top row displays individual pass-fail test functions, with yellow regions denoting trajectories satisfying $z^{pf}_i(\tau)=1$ (passing test). The bottom row presents the summation function of these pass-fail tests, where the output value equals the count of passed tests. The maximal-value region (where $\sum_{i=1}^m z^{pf}_i(\tau)=m$) identifies trajectories belonging to $\tilde{\mathcal T}$ that satisfy all test conditions.}
\label{fig:pf_combine}
\end{figure}

The lower plot in \cref{fig:pf_combine} exhibits this property, but since its derivative vanishes almost everywhere, it cannot provide meaningful optimization directions. Thus, we design a trajectory reward function that offers such directional guidance. \cref{fig:ind_combine} illustrates this function: it not only assigns higher rewards to trajectories nearer to $\tilde{\mathcal{T}}$ but also provides gradient information usable for policy optimization.

\begin{figure}[htb]
\centering
\includegraphics[width=\columnwidth]{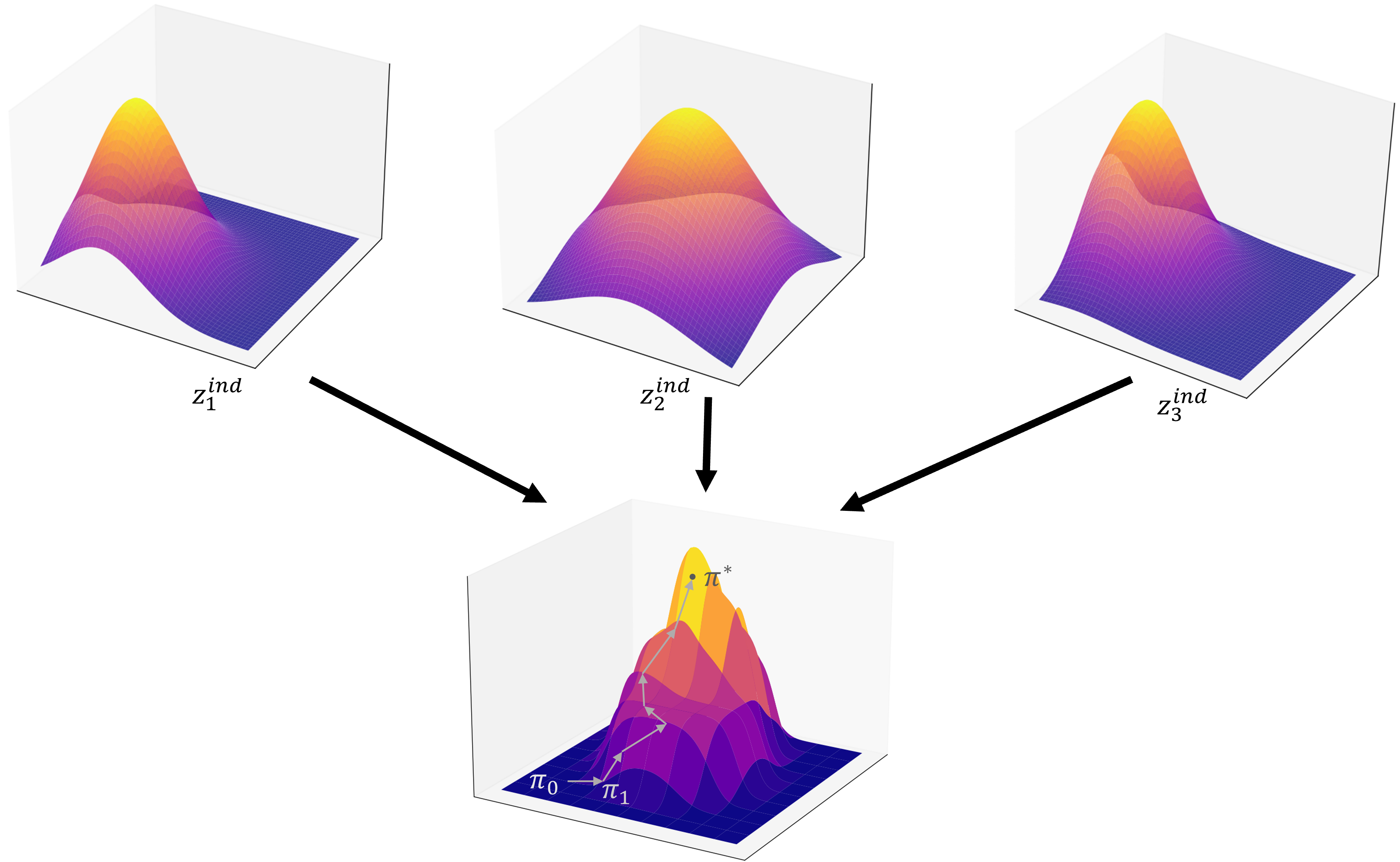}
\caption{A combination of indicative tests. The top row displays individual indicative test functions, where the vertical axis represents the test results of trajectories. The bottom row illustrates the trajectory return function obtained by combining these test functions. Policy optimization performed on this return function enables the agent to generate trajectories that converge to the optimal trajectory set (highlighted in \cref{fig:pf_combine}) through environment interactions.
}
\label{fig:ind_combine}
\end{figure}

A trajectory return function with the desired properties can be constructed by combining indicative test functions. \cref{fig:ind_combine} illustrates this composition process. However, constructing such a return function via linear weighted summation is challenging: trajectories satisfying all test criteria rarely excel in any individual metric, and linear combinations often induce metric conflicts or excessive emphasis on one metric. To address this, we use a nonlinear neural network to integrate the indicative test functions into the trajectory reward function.

\section{Interpretation of Trajectory Comparison}
\label{app:interpertation}

This section provides a detailed explanation of the following trajectory comparison priors:
\begin{itemize}
    \item Pass-fail tests take precedence over indicative tests
    \item A trajectory satisfying more pass-fail tests is closer to $\tilde{\mathcal{T}}$
    \item A trajectory passing more challenging tests (corresponding to smaller $\tilde{\mathcal{T}_i}$) is closer to $\tilde{\mathcal{T}}$
    \item All trajectories within $\tilde{\mathcal{T}}$ have zero distance to $\tilde{\mathcal{T}}$
    \item Under-optimized indicators should be prioritized
\end{itemize}

The optimization objective in TdRL requires trajectories to satisfy all pass-fail tests, while indicative tests provide quantitative measurements of specific aspects. In other words, pass-fail tests directly reflect the policy optimization goals, while indicative tests represent potential optimization pathways. This fundamental distinction establishes the priority hierarchy: \textbf{pass-fail tests take precedence over indicative tests during trajectory evaluation}.

In pass-fail tests, for the collection of sets $\{\tilde{\mathcal{T}_1}, \tilde{\mathcal{T}_2}, ..., \tilde{\mathcal{T}_m}\}$, consider their intersection family (i.e., all sets of the form $\bigcap_{i \in I} \tilde{\mathcal{T}_i}$ where $I \subseteq \{1,2,...,m\}$). There exists a minimal set $\tilde{\mathcal{T}_\tau}$ in this family that contains trajectory $\tau$, with $\tilde{\mathcal{T}} \subseteq \tilde{\mathcal{T}_\tau}$. The smaller $\tilde{\mathcal{T}_\tau}$, the tighter the upper bound on $\tilde{d}(\tau)$. The equality $\tilde{d}(\tau)=0$ holds iff $\tilde{\mathcal{T}_\tau}=\tilde{\mathcal{T}}$. In this intersection family, the intersection typically becomes smaller as the number of intersecting sets increases. Therefore, we could derive that \textbf{a trajectory satisfying more pass-fail tests is closer to $\tilde{\mathcal{T}}$}.

Similarly, if a pass-fail test is more difficult, the set of trajectories that can pass this test becomes smaller, thus we have that \textbf{a trajectory passing more challenging tests (corresponding to smaller $\tilde{\mathcal{T}_i}$) is closer to $\tilde{\mathcal{T}}$}. 
However, directly comparing the size of a difficult test's passing trajectory set with the intersection of multiple simple tests' passing sets is challenging. Thus, we cannot definitively determine their relative priorities. Given that our ultimate objective is to pass all pass-fail tests, this work prioritizes the quantity of passed tests over their difficulty. This prioritization can, of course, be adapted based on specific task requirements.

When pass-fail test functions cannot distinguish between trajectories (either all tests fail or trajectories pass identical test sets), we lose relative performance information between trajectories. In such cases, metric-based tests provide finer-grained differentiation. However, since multiple metric tests typically exist, their prioritization becomes necessary. Empirically, trajectories passing all pass-fail tests tend to exhibit consistently high metric test scores. Therefore, our training process emphasizes that \textbf{under-optimized indicators should be prioritized}, quantified using the skewness of historical metric test data distributions.

\cref{alg:lexicographic} presents the detailed pseudocode for the lexicographic trajectory comparison method.

\begin{algorithm}[htbp]
\caption{TdRL}
\label{alg:lexicographic}
\textbf{Input}: $\tau_1$, $\tau_2$
\textbf{Output}: $\mu$
\begin{algorithmic}[1] 
\IF{$\sum_{i=1}^m z^{pf}_i(\tau_1) = \sum_{i=1}^m z^{pf}_i(\tau_2) = m$}
\STATE return $\mu=0.5$
\ENDIF
\IF{$\sum_{i=1}^m z^{pf}_i(\tau_1) > \sum_{i=1}^m z^{pf}_i(\tau_2)$}
\STATE return $\mu=1$ 
\ENDIF
\IF {$\sum_{i=1}^m z^{pf}_i(\tau_1) < \sum_{i=1}^m z^{pf}_i(\tau_2)$}
\STATE return $\mu=0$
\ENDIF
\FOR{$i=1$ to $m$}
\IF{$z^{pf}_{k_i}(\tau_1) > z^{pf}_{k_i}(\tau_2)$}
\STATE return $\mu=1$
\ENDIF
\ENDFOR
\FOR{$i=1$ to $n$}
\IF{$z^{ind}_{l_i}(\tau_1) > z^{ind}_{l_i}(\tau_2)$}
\STATE return $\mu=1$
\ENDIF
\ENDFOR
\IF{$\mu$ is not assigned yet}
\STATE return $\mu=0.5$
\ENDIF
\end{algorithmic}
\end{algorithm}

\section{Test Function Settings}
\label{app:tester}

This section provides a detailed description of the test function settings for each task in the experimental part of the paper. To standardize notation, we adopt the prefix "pf-" for pass-fail tests and "ind-" for indicative tests.

\begin{itemize}
    \item \textbf{Walker-Stand}
    \begin{itemize}
        \item \textbf{pf-upright}: verifies if the cosine of torso angle remains within $[0.9, 1]$ at all timesteps.
        \item \textbf{pf-height}: verifies if the torso's height stays within $[1.2, +\infty]$ at all timesteps.
        \item \textbf{ind-upright}: counts the number of timesteps where the cosine of torso angle remains within $[0.9, 1]$.
        \item \textbf{ind-height}: counts the number of timesteps where the torso's height stays within $[1.2, +\infty]$.
    \end{itemize}
    \item \textbf{Walker-Run}
    \begin{itemize}
        \item \textbf{pf-upright}: verifies if the cosine of torso angle remains within $[0.9, 1]$ at all timesteps.
        \item \textbf{pf-height}: verifies if the torso's height stays within $[1.2, +\infty]$ at all timesteps.
        \item \textbf{pf-speed}: verifies if the mean speed in x-axis of torso stays within $[8, +\infty]$ across the trajectory.
        \item \textbf{ind-upright}: counts the number of timesteps where the cosine of torso angle remains within $[0.9, 1]$.
        \item \textbf{ind-height}: counts the number of timesteps where the torso's height stays within $[1.2, +\infty]$.
        \item \textbf{ind-speed}: mean speed in x-axis across the trajectory.
    \end{itemize}

    \item \textbf{Walker-JumpRun}
    \begin{itemize}
        \item \textbf{pf-upright}: verifies if the cosine of torso angle remains within $[0.9, 1]$ at all timesteps.
        \item \textbf{pf-height}: verifies if the torso's height stays within $[1.2, +\infty]$ at all timesteps.
        \item \textbf{pf-jump}: verifies if the torso's max height stays within $[1.5, +\infty]$ in the trajectory.
        \item \textbf{pf-speed}: verifies if the mean speed in x-axis of torso stays within $[8, +\infty]$ across the trajectory.
        \item \textbf{ind-upright}: counts the number of timesteps where the cosine of torso angle remains within $[0.9, 1]$.
        \item \textbf{ind-height}: counts the number of timesteps where the torso's height stays within $[1.2, +\infty]$.
        \item \textbf{ind-jump}: the torso's max height in the trajectory.
        \item \textbf{ind-speed}: mean speed in x-axis across the trajectory.
    \end{itemize}

    \item \textbf{Cheetah-Run}
    \begin{itemize}
        \item \textbf{pf-speed}: verifies if the mean speed in x-axis of body stays within $[5, +\infty]$ across the trajectory.
        \item \textbf{ind-speed}: mean speed in x-axis across the trajectory.
    \end{itemize}
    \item \textbf{Quadruped-Run}
    \begin{itemize}
        \item \textbf{pf-upright}: verifies if the cosine of torso angle remains within $[0.9, 1]$ at all timesteps.
        \item \textbf{pf-speed}: verifies if the mean speed in x-axis of torso stays within $[10, +\infty]$ across the trajectory.
        \item \textbf{ind-upright}: counts the number of timesteps where the cosine of torso angle remains within $[0.9, 1]$.
        \item \textbf{ind-speed}: mean speed in x-axis across the trajectory.
    \end{itemize}
    
\end{itemize}

\end{document}